%% file: main.tex
\documentclass[lettersize,journal]{IEEEtran}

\usepackage{amsmath,amsfonts}
\usepackage{algorithm}
\usepackage{algpseudocode}
\usepackage{array}
\usepackage{textcomp}
\usepackage{stfloats}
\usepackage{url}
\usepackage{verbatim}
\usepackage{graphicx}
\usepackage{cite}
\usepackage{booktabs}
\usepackage{cellspace}
\usepackage[shortlabels]{enumitem}
\usepackage{placeins}

\usepackage{booktabs}

\input{packages.tex}

\input{math_commands.tex}
\long\def\comment#1{}

\begin{document}

\title{
Federated Low-Rank Koopman Learning for Multivariate Time-Series Anomaly Detection in IoT Systems}

\author{Tung-Anh Nguyen, Van-Phuc Bui, Anh Tuyen Le, Kim Hue Ta,\\ Minh Thuy Le, J.Andrew Zhang, and Xiaojing Huang
\thanks{T.-A Nguyen, A. T. Le, J. A. Zhang, and X. Huang are with Faculty of Engineering and IT, University of Technology Sydney, Ultimo, NSW, 2007, Australia (emails: tunganh.nguyen@uts.edu.au, anhtuyen.le@uts.edu.au, andrew.zhang@uts.edu.au, xiaojing.huang@uts.edu.au). V.-P Bui is with the Faculty of Information Technology, FPT University, Vietnam (email: phucbv5@fe.edu.vn). Kim Hue Ta and Minh Thuy Le are with School of Electrical and Electronic Engineering, Hanoi University of Science and Technology, Vietnam (email: hue.tathikim@hust.edu.vn, minhthuy.le@hust.edu.vn).}

}

\markboth{Journal of \LaTeX\ Class Files,~Vol.~14, No.~8, August~2021}%
{Shell \MakeLowercase{\textit{et al.}}: A Sample Article Using IEEEtran.cls for IEEE Journals}

\maketitle

\input{sections/abstract}

\begin{IEEEkeywords}
Internet of Things, Federated Learning, Anomaly Detection, Multivariate Time Series, Koopman Operator, Stiefel Manifold, Edge Intelligence.
\end{IEEEkeywords}

\input{sections/introduction}

\input{sections/background}
\input{sections/jiot_method}
\input{sections/jiot_convergence}

\input{sections/jiot_experiment}
\FloatBarrier

\input{sections/conclusion/conclusion}


\bibliographystyle{IEEEtran}
\bibliography{references}



%

\end{document}

%% file: packages.tex
\usepackage{microtype}
\usepackage{graphicx}
\usepackage{booktabs} 
\usepackage{hyperref}

\usepackage{amsmath}
\usepackage{amssymb}
\usepackage{mathtools}
\usepackage{amsthm}
\usepackage{enumitem}

\usepackage[capitalize,noabbrev]{cleveref}
\usepackage{hyperref}
\usepackage{xr-hyper}

\makeatletter
\newcommand*{\addFileDependency}[1]{
  \typeout{(#1)}
  \@addtofilelist{#1}
  \IfFileExists{#1}{}{\typeout{No file #1.}}
}
\makeatother

\theoremstyle{plain}
\newtheorem{theorem}{Theorem}[section]

\newtheorem{lemma}[theorem]{Lemma}

\theoremstyle{definition}

\theoremstyle{remark}
\newtheorem{remark}[theorem]{Remark}

\usepackage{hyperref}

\usepackage{url}
\usepackage{graphicx,wrapfig,lipsum} 
\usepackage{times}
\usepackage{soul}
\usepackage{url}
\usepackage[utf8]{inputenc}
\usepackage[small]{caption}
\usepackage{graphicx}
\usepackage{amsthm}
\usepackage{booktabs}
\usepackage{mwe} 
\usepackage{wrapfig}

\usepackage{setspace}
\usepackage{amsmath,amssymb,amsfonts}
\input{math_commands.tex}
\crefformat{equation}{(#2#1#3)}
\crefrangeformat{equation}{(#3#1#4) to~(#5#2#6)}
\crefmultiformat{equation}{(#2#1#3)}%
{ and~(#2#1#3)}{, (#2#1#3)}{ and~(#2#1#3)}
\crefname{assumption}{assumption}{assumptions}
\crefname{Assumption}{Assumption}{Assumptions}
\usepackage{bbm}
\usepackage[mathscr]{euscript}
\usepackage{graphicx}
\usepackage{textcomp}
\usepackage{xcolor}

\usepackage{xspace}
\usepackage{caption}
\usepackage{subcaption}
\usepackage{bm}
\usepackage{array}
\usepackage{multirow}

\makeatletter
\newcommand{\multilines}[1]{%
	\begin{tabularx}{\dimexpr\linewidth-\ALG@thistlm}[t]{@{}X@{}}
		#1
	\end{tabularx}
}
\makeatother

\usepackage[textsize=tiny]{todonotes}

\Crefname{figure}{Fig.}{Figs.}
\Crefname{table}{Table}{Tables}
\Crefname{section}{Sec.}{Secs.}

\DeclarePairedDelimiterX{\norm}[1]{\lVert}{\rVert}{#1}
\DeclarePairedDelimiterX{\abs}[1]{\lvert}{\rvert}{#1}

\DeclareSymbolFont{extraup}{U}{zavm}{m}{n}
\DeclareMathSymbol{\varheart}{\mathalpha}{extraup}{86}

\DeclareUnicodeCharacter{FB01}{fi}



%% file: math_commands.tex
\usepackage{amsmath,amsfonts,bm}

\def\eqref#1{(\ref{#1})}

\def\1{\bm{1}}

%% file: sections/abstract.tex
\begin{abstract}

Distributed IoT systems generate multivariate time-series streams for monitoring physical assets, servers, and embedded sensing platforms. Detecting abnormal temporal behavior is critical for fault diagnosis, predictive maintenance, and security. However, practical IoT anomaly detection is hindered by decentralized and non-IID data, limited bandwidth, and the constrained computation and memory of edge devices. This paper proposes FedKAD, a resource-efficient federated Koopman anomaly detection framework for distributed IoT multivariate time series. Unlike deep-learning-based anomaly detectors that require training and communicating large neural models, FedKAD learns normal temporal dynamics through lightweight sliding-window Koopman representations. Federated training is formulated as a low-rank consensus problem, where raw sensor streams and local reduced dynamics remain on device while only compact subspace variables are exchanged with the server. To optimize the shared representation under orthonormality constraints, we develop a federated Stiefel-ADMM algorithm and provide convergence and stationarity analysis under partial client participation. During inference, each client detects anomalies locally by measuring the prediction residual between observed future trajectories and the learned Koopman dynamics. Experiments on four widely used multivariate time-series anomaly detection benchmarks show that FedKAD maintains or improves detection performance compared with federated deep-learning baselines. More importantly for IoT deployment, FedKAD provides up to $2.1\times10^3$ faster training, $80\times$ lower communication, and $79\times$ lower inference latency than neural baselines, confirming its suitability for resource-constrained edge devices.

\end{abstract}

%% file: sections/introduction.tex
\section{Introduction}

Distributed Internet-of-Things (IoT) systems generate multivariate time-series (MVTS) streams for monitoring physical assets, industrial processes, servers, spacecraft subsystems, and embedded sensing platforms. Detecting abnormal temporal behavior in these streams is essential for fault diagnosis, predictive maintenance, and security monitoring \cite{chandola2009anomaly,cook2020anomaly}. Unlike conventional offline anomaly detection, IoT anomaly detection must often operate continuously at the network edge, where sensor streams are produced by geographically distributed devices with limited communication, computation, memory, and energy resources. These constraints make it difficult to rely on centralized data collection or computationally expensive models, especially when raw sensor measurements are privacy-sensitive or operationally restricted.

A fundamental challenge in IoT MVTS anomaly detection is that abnormal behavior is rarely limited to an isolated point outlier. In many practical systems, faults, cyberattacks, and degradation patterns appear as abnormal temporal evolution, sensor drift, delayed responses, changes in cross-channel dependencies, or deviations from the normal dynamical behavior of the monitored process. For example, a machine fault may first manifest as a subtle change in vibration and temperature dynamics before a large point-wise spike appears, while a compromised IoT device may exhibit abnormal temporal communication patterns rather than a single abnormal packet. Therefore, an effective detector should capture both temporal dependencies and inter-variable relationships, and should identify when the observed trajectory becomes inconsistent with learned normal dynamics.

Early time-series anomaly detection methods relied on statistical models, distance-based detectors, density estimation, or forecasting residuals \cite{chandola2009anomaly,cook2020anomaly}. These approaches are attractive because they are often interpretable and computationally lightweight. However, they usually require strong assumptions about stationarity, linearity, or simple distributional forms, and they struggle to model nonlinear dependencies in high-dimensional MVTS. Thresholding methods based on extreme value theory, such as (streaming) peaks-over-threshold, can reduce the need for manually chosen thresholds in streaming settings \cite{siffer2017pot}, but they are typically applied on top of a precomputed anomaly score and do not by themselves solve the problem of learning complex normal dynamics from distributed multivariate sensor streams. Deep-learning-based anomaly detectors have been developed to address the nonlinear and high-dimensional nature of MVTS~\cite{xu2018unsupervised,
zong2018deep, malhotra2016lstm, su2019robust,
zhao2020multivariate, deng2021graph, xu2022anomaly, tuli2022tranad}. Reconstruction-based models, including autoencoder and variational-autoencoder approaches, learn compact representations of normal data and flag samples with large reconstruction errors \cite{xu2018donut,zong2018dagmm}. Recurrent models, such as Long Short-Term Memory (LSTM) encoder-decoders and stochastic recurrent networks, further capture temporal dependencies and have been applied to multi-sensor and spacecraft telemetry anomaly detection \cite{malhotra2016lstm,hundman2018spacecraft,su2019omni}. More recent methods combine adversarial training, graph neural networks, attention mechanisms, and transformer architectures to model complex temporal and inter-sensor dependencies \cite{zhao2020mtadgat,deng2021gdn,xu2022anomalytransformer,tuli2022tranad}. These methods have achieved strong detection accuracy in centralized benchmarks. 

Despite their success, deep anomaly detectors are not always suitable for federated IoT deployment. First, many of them rely on heavy neural reconstruction, recurrent, graph, or attention modules, requiring repeated backpropagation over local windows. Second, when used in federated learning (FL), the clients must repeatedly communicate neural parameters or gradients, which can become expensive for bandwidth-limited IoT links. In addition, their memory and computation costs can be prohibitive for weak edge devices such as low-cost gateways, Raspberry Pi-class nodes, and embedded platforms. Finally, deep models are often sensitive to non-independent and non-identically distributed (non-IID) client data, which is common in IoT systems because devices operate under different environments, workloads, anomaly rates, and sensing configurations.

FL offers a natural way to train models across distributed clients without uploading raw data to a server \cite{mcmahan2017communication,kairouz2021advances}. In IoT systems, FL has been explored for privacy-preserving analytics, intrusion detection, and decentralized spatio-temporal modeling \cite{nguyen2021fliot,nguyen2019diot,meng2021crossnode}. For example, DIoT uses federated learning to aggregate device-type-specific behavior profiles for detecting compromised IoT devices \cite{nguyen2019diot}, while cross-node federated graph neural networks model decentralized spatio-temporal sensor data without direct data sharing \cite{meng2021crossnode}. However, existing FL-based approaches usually inherit the computational and communication burden of the underlying model class. When the local model is a deep neural anomaly detector, FL protects raw data but does not remove the need for local neural training and repeated communication of large model parameters. 
These limitations suggest that federated IoT anomaly detection should not simply wrap existing deep detectors in an FL protocol. Instead, the anomaly detector should be lightweight by construction, requiring low local computation, low memory, and compact communication while allowing client-specific dynamics to remain on device.

Koopman operator theory provides a principled framework for modeling nonlinear dynamical systems through linear evolution in a lifted observable space \cite{koopman1931hamiltonian,rowley2009spectral,williams2015edmd}. This perspective is well suited to MVTS because their variables are often coupled components of an underlying dynamical process rather than independent scalar streams. In many IoT monitoring systems, a change in one variable can be an early indicator of abnormal behavior and may also affect the evolution of other variables. For example, in physiological sensing, abnormal patterns in blood pressure and heart rate are more informative when analyzed jointly than when each signal is considered in isolation; similarly, in industrial monitoring, changes in vibration, temperature, and power consumption may jointly indicate machine degradation. For MVTS data,  sliding-window or delay-coordinate observables can represent a temporal segment as a dynamical state that jointly contains recent history and cross-channel dependencies. Data-driven Koopman approximations, such as dynamic mode decomposition (DMD) and optimal mode decomposition (OMD), then learn a compact linear evolution model from snapshot pairs \cite{schmid2010dmd,williams2015edmd,wynn2013optimal}. The resulting low-rank Koopman subspace captures dominant coherent modes of normal behavior, while the reduced operator predicts their temporal evolution. Anomalies are therefore detected as deviations from these learned normal dynamics: when faults, attacks, or sensor degradation disturb either the temporal evolution of individual variables or their inter-variable coupling, the Koopman prediction residual becomes large.

In this paper, we propose \emph{FedKAD}, a federated low-rank Koopman anomaly detection framework for distributed IoT MVTS. The key idea is to replace heavy federated neural reconstruction with lightweight federated low-rank dynamical modeling. Each edge device transforms its local MVTS into sliding-window snapshots and learns a Koopman-style model of normal temporal evolution. Instead of learning a full high-dimensional Koopman operator, FedKAD approximates the local dynamics using an orthonormal low-rank subspace and a private reduced operator. During inference, each device detects anomalies locally by comparing the observed future trajectory with the trajectory predicted by its learned normal dynamics.

FedKAD is designed to match the constraints of federated IoT systems. Raw time-series snapshots never leave the device. Each client keeps its local data matrices and reduced Koopman operator private, while the server coordinates only a shared low-rank subspace through consensus optimization. This design substantially reduces communication compared with federated deep-learning baselines, because clients exchange compact subspace variables rather than full neural models. At the same time, the local reduced dynamics remain client-specific, allowing FedKAD to adapt to heterogeneous local behavior while still benefiting from a shared federated representation. The resulting optimization problem is a low-rank consensus formulation with orthonormal subspace constraints. To solve it, we develop a federated Stiefel-ADMM algorithm that alternates between local subspace updates, server-side consensus aggregation, and local dual updates. The orthonormality constraint is handled directly on the Stiefel manifold using projected gradient steps and QR retraction, following the principles of optimization on matrix manifolds \cite{absil2008optimization}, while the consensus structure follows the ADMM framework for distributed optimization \cite{boyd2011distributed}. We also provide convergence and stationarity analysis under partial client participation.

The main contributions of this paper are summarized as follows:
\begin{itemize}
\item We propose FedKAD, a federated Koopman anomaly detection framework for distributed IoT MVTS, enabling each edge device to detect abnormal temporal behavior locally without uploading raw sensor streams.
\item We formulate federated anomaly detection as a low-rank Koopman consensus problem, where each client keeps its local snapshots and reduced dynamics private while only compact subspace variables are exchanged with the server.
\item We develop a federated Stiefel-ADMM algorithm for optimizing the shared orthonormal Koopman subspace under partial client participation, and provide convergence and stationarity analysis.
\item We evaluate FedKAD on four widely used MVTS anomaly detection benchmarks under non-IID federated settings and multiple evaluation protocols, motivated by recent concerns about point-adjusted time-series anomaly detection metrics \cite{kim2022rigorous}. FedKAD achieves the best F1 score on three of four datasets while demonstrating its suitability for resource-constrained IoT deployment.
\end{itemize}


%% file: sections/background.tex
\section{Background}
\input{sections/background/jiot_back_ground}

%% file: sections/background/jiot_back_ground.tex
\subsection{Federated Learning}
\label{sec:fl-background}

FL allows distributed clients to collaboratively train a model while keeping raw data local. Each client performs local optimization and sends model updates to a server, which aggregates them into a shared model for the next communication round. FedAvg \cite{mcmahan2017communication} is the standard formulation, and subsequent FL variants have enabled large-scale privacy-aware learning across domains \cite{kairouz2021advances,li2020federated}, including MVTS classification and forecasting.

Despite these benefits, FL is limited by communication and client-side computation. Large parameter updates are expensive to exchange, while edge devices often have restricted memory, energy, and compute capacity. These issues become more severe for MVTS anomaly detection, where anomalies are scarce, labels are often unavailable, and temporal patterns vary significantly across clients. Prior work reduces communication through quantization \cite{reisizadeh2020fedpaq}, compression or sparsification \cite{sattler2019robust}, and adaptive communication \cite{caldas2018expanding}. However, such techniques can introduce extra processing and tuning overhead, motivating lightweight FL designs tailored to unsupervised anomaly detection on distributed time-series streams.

\subsection{Multivariate Time-Series Anomaly Detection}
\label{sec:mtad-background}

MVTS anomaly detection aims to identify abnormal behaviors from temporally correlated and high-dimensional observations. Existing methods include recurrent models for temporal dependency modeling \cite{malhotra2015long,hundman2018detecting}, reconstruction-based autoencoders that detect deviations from learned normal patterns \cite{zong2018deep,su2019robustly}, and GAN-based approaches that model complex data distributions through adversarial learning \cite{li2019mad,geiger2020tadgan}. More recent methods use graph neural networks to capture inter-variable dependencies \cite{deng2021graph,zhao2020multivariate} or Transformers to model long-range temporal interactions via self-attention \cite{xu2022anomaly,zhou2021informer}.

While these models improve detection capacity, they often rely on computationally heavy architectures, centralized training, or careful optimization. Such requirements are difficult to satisfy in edge and IoT settings, where data are distributed across devices, anomalies are rare and unlabeled, and clients have limited memory, energy, and computation. This motivates lightweight methods that can capture temporal structure while remaining efficient under distributed and resource-constrained deployment.

\input{sections/background/koopman_theory}

\subsection{Optimal Mode Decomposition}
\label{sec:omd}
Based on the finite-dimensional linear approximation in
Eq.~\eqref{eq:finite_koopman}, the goal is to estimate a linear operator
$K\in\mathbb{R}^{d\times d}$ that advances raw data snapshots. Given a
sequence of raw data vectors
$x_1,x_2,\ldots,x_{M+1}$, where $x_t\in\mathbb{R}^{d}$ and $M$ is the
number of one-step training pairs, we form the paired snapshot matrices
\begin{equation}
\label{eq:omd_snapshot_x}
    X =
    \big[
    x_1,x_2,\ldots,x_M
    \big]
    \in\mathbb{R}^{d\times M},\nonumber
\end{equation}
\begin{equation}
\label{eq:omd_snapshot_y}
    Y =
    \big[
    x_2,x_3,\ldots,x_{M+1}
    \big]
    \in\mathbb{R}^{d\times M}. \nonumber
\end{equation}
Here, $X,Y\in\mathbb{R}^{d\times M}$ contain consecutive raw data
snapshots. A full least-squares estimate of $K$ can be expensive and may
overfit when $d$ is large. OMD addresses this by approximating the transition
matrix with a low-rank operator of the form
\begin{equation}
\label{eq:omd_low_rank_operator}
    K \approx P Q P^\top ,\nonumber
\end{equation}
where $P\in\mathbb{R}^{d\times r}$ has orthonormal columns,
$Q\in\mathbb{R}^{r\times r}$, and $r\ll d$. The OMD objective is therefore
\begin{equation}
\label{eq:omd_objective}
    \min_{P,Q}
    \left\|
    Y-PQP^\top X
    \right\|_F^2,
    \qquad
    \mathrm{s.t.}\quad P^\top P=I .
\end{equation}
Here, $P$ spans an $r$-dimensional subspace of the raw data space, while
$Q$ models the linear dynamics within that reduced subspace. The projection
$P^\top$ maps each raw snapshot to reduced coordinates, $Q$ advances the
reduced state, and $P$ maps the prediction back to the original raw data
space. Hence, $PQP^\top\in\mathbb{R}^{d\times d}$ provides a rank-$r$
approximation of the finite-dimensional transition operator in
Eq.~\eqref{eq:finite_koopman}.

\subsection{ADMM-based Koopman Learning.}
The Alternating Direction Method of Multipliers (ADMM) is a widely used
optimization framework for solving structured problems by decomposing them
into smaller subproblems coordinated through dual variables
\cite{boyd2011distributed}. It is particularly well suited to distributed
optimization and statistical learning because local subproblems can be solved
independently while a global consensus is enforced through augmented
Lagrangian updates. This decompositional property makes ADMM attractive for
federated learning, where clients collaboratively learn a shared model without
exchanging raw data.

In this work, we exploit ADMM to develop a privacy-preserving federated
Koopman learning framework for multivariate time-series anomaly detection in
IoT systems. Each client locally estimates a low-rank Koopman model from its
own normal time-series snapshots, while the server enforces consensus among
client-side Koopman subspaces through ADMM-based aggregation. The local
dynamics model follows the OMD principle of jointly identifying a low-rank
subspace and the corresponding reduced linear dynamics
\cite{wynn2013optimal}. Unlike centralized Koopman learning, the proposed
formulation keeps raw sensor data on local devices and communicates only
model-related variables, which is consistent with the privacy-preserving
motivation of federated anomaly detection for MVTS
\cite{zhang2021federated,zhu2022deep,liu2022fedtadbench}. The learned
Koopman dynamics are then used to predict normal system evolution, and
anomalies are detected from large prediction residuals.

%% file: sections/background/koopman_theory.tex
\subsection{Koopman-Based Anomaly Detection} 

Let $x_t\in\mathcal{M}\subset\mathbb{R}^{n}$ be the state of a discrete-time
nonlinear system evolving as $x_{t+1}=\varphi(x_t)$. Koopman theory studies the
evolution of observables $\Phi$ rather than the nonlinear map $\varphi$ itself.
The Koopman operator $\mathcal{K}$ acts linearly on observables as
\begin{equation}
\label{eq:general_koopman}
\mathcal{K}\Phi(x_t)=\Phi(\varphi(x_t))=\Phi(x_{t+1}).
\end{equation}

Since $\mathcal{K}$ is generally infinite-dimensional, practical methods use a
finite observable representation $\Phi(x_t)\in\mathbb{R}^{D}$ and approximate
\begin{equation}
\label{eq:finite_koopman}
\Phi(x_{t+1})\approx K\Phi(x_t),
\end{equation}
where $K\in\mathbb{R}^{D\times D}$ is a finite-dimensional Koopman operator.

Data-driven Koopman methods, including DMD, OMD, and their variants, approximate nonlinear
dynamics from snapshot pairs through a finite-dimensional linear operator
\cite{schmid2010dmd,wynn2013optimal}. These methods are interpretable, but their accuracy depends on whether the chosen observable
space is sufficiently expressive. Extended DMD (EDMD) improves expressiveness
by lifting states into nonlinear observables~\cite{williams2015edmd}, while
deep Koopman methods learn the observable map and operator jointly
\cite{lusch2018deep}. However, these approaches often require carefully
designed dictionaries, high-dimensional lifted spaces, or additional neural
training.

Koopman-based anomaly detection uses this linearized dynamical representation
to learn normal system evolution and detect anomalies through deviations from
the learned dynamics. Koopman operator frameworks have been used for
time-series modeling and anomaly detection, while
recent Koopman predictors have been applied to anomaly detection in complex IoT
systems with time-series data~\cite{fu2024deep}. Koopman-based residual
models have also been used for fault detection and isolation in nonlinear
dynamical systems~\cite{bakhtiaridoust2022model}. These studies show that
Koopman representations are useful for structured and interpretable residual-based anomaly scoring. However, existing Koopman anomaly methods are mostly centralized, assuming that data are pooled before training.

%% file: sections/jiot_method.tex
\input{sections/jiot_problem_formulation}
\input{sections/jiot_stiefield}
\input{sections/jiot_anomaly_detection}

%% file: sections/jiot_problem_formulation.tex
\section{System Model and Problem Formulation}
\label{sec:fedkad_problem_formulation}

\begin{figure}[!t]
    \centering
    \includegraphics[width=0.5\textwidth]{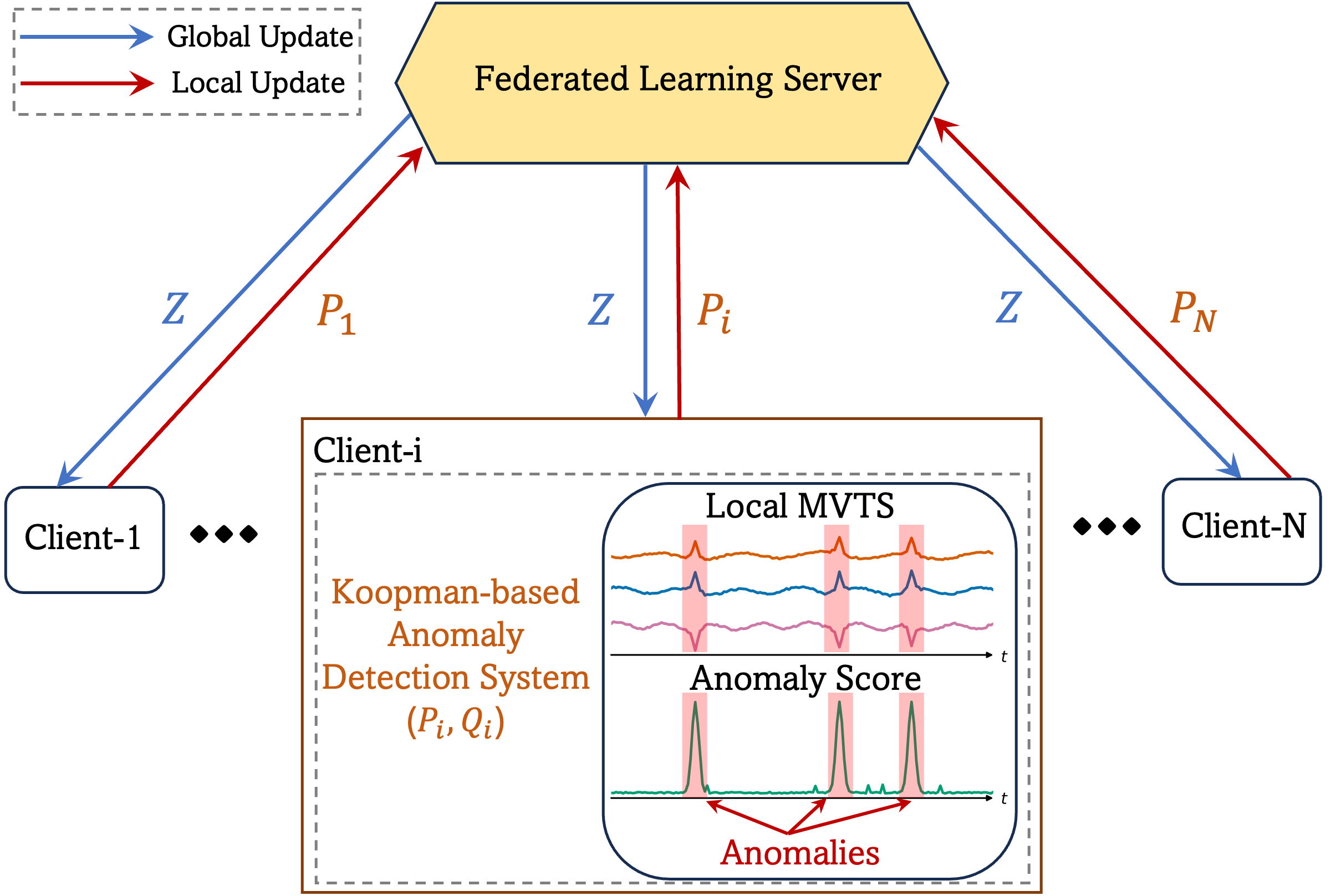}
    \caption{System model of \textsc{FedKAD} for federated IoT multivariate time-series anomaly detection.}
    \label{fig:fedks_stationary}
    \vspace{-10pt}
\end{figure}

\subsection{Federated IoT Time-Series Anomaly Detection}
\label{sec:fedkad_iot_system_model}

As illustrated in \Cref{fig:fedks_stationary}, we consider a federated IoT
monitoring system consisting of $N$ edge devices coordinated by a central
server. Client $i\in\{1,\ldots,N\}$ observes a local MVTS $x_i(t)\in\mathbb{R}^{n}$, where $n$ is the number of monitored
variables. The objective is to collaboratively learn an anomaly detector
for temporal system behavior across clients, while keeping raw
time-series data local.

In multivariate IoT systems, anomalies rarely appear as isolated
point-wise outliers~\cite{malhotra2015long,hundman2018detecting}.
Instead, they often arise from abnormal temporal evolution, delayed
effects between signals, or changes in cross-variable
dependencies~\cite{deng2021graph,zhao2020multivariate}. Therefore,
anomaly detection requires modeling not only the current observation, but
also the temporal context around it.

To capture local spatio-temporal dependencies, each client converts its
raw stream into a sliding-window state. For a window length $W$, we define
\begin{equation}
\label{eq:fedkad_window}
    z_i(t)
    =
    \big[
        x_i(t-W+1)^\top,\ldots,x_i(t)^\top
    \big]^\top
    \in\mathbb{R}^{d},
    \qquad d=nW .
\end{equation}
This delay-coordinate representation is widely used in time-series
anomaly detection, where reconstruction or prediction is performed over
local temporal contexts~\cite{tuli2022tranad}. By
flattening the window into $z_i(t)$, the model operates jointly over the
$W$ time steps and the $n$ monitored variables, thereby capturing both
temporal dependencies across time and cross-variable dependencies within
the system.

\subsection{Problem Formulation}
\label{sec:fedkad_problem}

Given the local windowed states $\{z_i(t)\}$, the goal of federated
time-series anomaly detection is to learn the normal temporal evolution
of distributed MVTS without transmitting raw client
data to the server. In general, each client uses a prediction model to
estimate the next windowed state from the current one. Let
\begin{equation}
\label{eq:fedkad_general_prediction}
    \hat z_i(t+1)
    =
    g_i\big(z_i(t);\Theta,\theta_i\big),\nonumber
\end{equation}
where $\Theta$ denotes the global parameters shared across clients and
$\theta_i$ denotes client-specific parameters. The prediction deviation is
then used as the anomaly score:
\begin{equation}
\label{eq:fedkad_general_score}
    s_i(t)
    =
    \left\|
        z_i(t+1)-\hat z_i(t+1)
    \right\|_2 .\nonumber
\end{equation}
A large value of $s_i(t)$ indicates that the observed temporal evolution
cannot be well explained by the learned normal dynamics, and is therefore
more likely to be anomalous.

The general federated learning objective can be written as
\begin{equation}
\label{eq:fedkad_general_fl_problem}
\begin{aligned}
    \min_{\Theta,\{\theta_i\}_{i=1}^{N}}\quad
    &\sum_{i=1}^{N}
    \sum_{t=W}^{T_i-1}
    \ell_i
    \left(
        z_i(t+1),
        g_i\big(z_i(t);\Theta,\theta_i\big)
    \right).
\end{aligned}
\end{equation}
where $\ell_i(\cdot)$ measures the prediction error. This formulation
allows the federation to learn common normal temporal patterns across
clients, while preserving local differences caused by non-IID data~\cite{collins2021exploiting}.

Motivated by the ability of Koopman theory to represent temporal
evolution, we reformulate
\eqref{eq:fedkad_general_fl_problem} under the Koopman theory in
\eqref{eq:general_koopman} as follows:
\begin{equation}
\label{eq:fedkad_general_koopman_problem}
\begin{aligned}
    \min_{\mathcal{K}}\quad
    &\sum_{i=1}^{N}
    \sum_{t=W}^{T_i-1}
    \left\|
        \Phi\big(z_i(t+1)\big)
        -
        \mathcal{K}\Phi\big(z_i(t)\big)
    \right\|_2^2 .
\end{aligned}
\end{equation}
However, directly solving \eqref{eq:fedkad_general_koopman_problem} is not well suited to federated IoT
settings. The lifted Koopman operator can be high-dimensional, making it
costly to store, compute, and communicate on resource-constrained edge
devices. Moreover, a single full operator may fail to capture
heterogeneous client dynamics under non-IID data. These limitations
motivate \textsc{FedKAD}, which uses a communication- and memory-efficient
low-rank shared subspace while retaining client-specific reduced dynamics.

\section{Proposed FedKAD Framework} \label{sec:fedkad_method}
To solve the federated prediction problem in
\eqref{eq:fedkad_general_koopman_problem}, \textsc{FedKAD} combines a common
nonlinear observable map with low-rank Koopman modeling. The method
consists of three main steps. First, each client maps its local
sliding-window states into a common observable space. Second, the
resulting Koopman dynamics are factorized into a shared low-rank
subspace and a client-specific reduced operator. Finally, the shared
subspace is learned through federated consensus optimization, while
the local snapshots and reduced dynamics remain on device.

\subsection{Koopman Representation for Temporal Dynamics}
\label{sec:fedkad_koopman_model}


To model nonlinear temporal dynamics, we use a Koopman-inspired
representation. Koopman theory represents nonlinear state evolution as
linear evolution in an observable space
\cite{koopman1931hamiltonian,kaiser2021data}. Since the Koopman operator is
generally infinite-dimensional, data-driven Koopman methods approximate it
using finite-dimensional observables and a linear operator learned from
snapshot pairs~\cite{schmid2010dmd,wynn2013optimal,williams2015edmd}. Thus,
for a lifting map $\Phi(\cdot)$, the local dynamics of client $i$ are
modeled as
\begin{equation}
\label{eq:fedkad_koopman_phi}
    \Phi(z_i(t+1))
    \approx
    K_i\Phi(z_i(t)), \nonumber
\end{equation}
where $K_i$ is a finite-dimensional lifted Koopman operator.

We construct $\Phi(\cdot)$ using a fixed nonlinear lifting. Classical DMD
and OMD fit a linear transition directly in the observed state space, which
is efficient but restrictive for nonlinear multivariate dynamics. Inspired
by EDMD~\cite{williams2015data}, we augment the windowed state with fixed
nonlinear observables:
\begin{equation}
\label{eq:fedkad_lift}
    \Phi(z_i(t))
    =
    \begin{bmatrix}
    \underbrace{z_i(t)}_{\text{Windowed state}} \\
    \underbrace{
    \tanh\!\left(W_{\mathrm{lift}}z_i(t)+b\right)
    }_{\text{Nonlinear observables}}
    \end{bmatrix}
    \in\mathbb{R}^{D},
    \qquad
    D=d+d_{\mathrm{lift}} .
\end{equation}
Here, $W_{\mathrm{lift}}\in\mathbb{R}^{d_{\mathrm{lift}}\times d}$ and
$b\in\mathbb{R}^{d_{\mathrm{lift}}}$ are fixed random parameters shared by
all clients. The first block preserves the original windowed state, while
the second block adds $d_{\mathrm{lift}}$ nonlinear random observables.
The $\tanh$ nonlinearity is motivated by universal approximation theory:
\emph{finite linear combinations of affine projections followed by sigmoidal
activations can approximate continuous functions on compact domains to
arbitrary accuracy}~\cite{cybenko1989approximation}. This fixed lifting
enriches the Koopman observable space without hand-designing a large
dictionary or training a neural lifting network. Since all clients use the
same window length and lifting map, their lifted states share a common
dimension and semantic structure, which makes federated subspace alignment
well defined.

For client $i$, the lifted training pairs are formed by stacking
consecutive lifted states column-wise:
\begin{equation}
\label{eq:fedkad_snapshots}
    X_i=\big[\Phi(z_i(t))\big]_{t\in\mathcal{T}_i},
    Y_i=\big[\Phi(z_i(t+1))\big]_{t\in\mathcal{T}_i},
    X_i,Y_i\in\mathbb{R}^{D\times M_i},
    \nonumber
\end{equation}
where $M_i$ is the number of valid one-step training pairs. For a local
sequence of length $T_i$, we use
\begin{equation}
\label{eq:fedkad_valid_pairs}
    \mathcal{T}_i=\{W,\ldots,T_i-1\},
    \qquad
    M_i=T_i-W .
    \nonumber
\end{equation}

\subsection{Low-Rank Local Koopman Learning Problem Formulation}
\label{sec:fedkad_low_rank_omd}
The finite-dimensional Koopman model in \eqref{eq:finite_koopman}
requires a lifted operator $K_i\in\mathbb{R}^{D\times D}$, which faces several limitations in federated IoT settings. Its storage, computation, and
communication costs scale quadratically with the lifted dimension, and a
single full operator is poorly suited to heterogeneous client dynamics
under non-IID data. These considerations motivate an OMD-inspired
low-rank formulation, which separates the Koopman subspace from the
reduced temporal operator. Accordingly, we adopt the following low-rank
factorization~\cite{wynn2013optimal}:
\begin{equation}
\label{eq:fedkad_low_rank_prediction}
    \Phi(z_i(t+1))
    \approx
    P_iQ_iP_i^\top \Phi(z_i(t)),
    \nonumber
\end{equation}
where
\begin{equation}
\label{eq:fedkad_pq_dimensions}
    P_i\in\mathbb{R}^{D\times r},
    \qquad
    Q_i\in\mathbb{R}^{r\times r},
    \qquad
    r\ll D .
    \nonumber
\end{equation}
Here, $P_i$ spans an $r$-dimensional Koopman subspace in the lifted
observable space, while $Q_i$ describes the reduced linear evolution within
that subspace. The factorization reduces the number of parameters from
$O(D^2)$ for a full lifted operator to $O(Dr+r^2)$, making the local model
more suitable for resource-constrained IoT devices.

Using the lifted snapshot matrices $(X_i,Y_i)$, client $i$ learns its local
low-rank Koopman model by solving the lifted OMD objective

\begin{equation}
\label{eq:fedkad_local_omd}
\begin{aligned}
    \min_{P_i,Q_i}\quad
    &\left\|
        Y_i-P_iQ_iP_i^\top X_i
    \right\|_F^2, \\
    \text{s.t.}\quad
    &P_i^\top P_i=I .
\end{aligned}
\end{equation}


\subsection{Federated Koopman Learning for IoT Systems}
\label{sec:fedkad_distributed_formulation}

The low-rank factorization in \eqref{eq:fedkad_low_rank_prediction} separates the Koopman subspace $P_i$ from the reduced dynamics $Q_i$. This separation aligns with shared-representation personalized federated learning, where a global representation is shared across clients while local components capture client-specific variation~\cite{collins2021exploiting}. Motivated by this view, \textsc{FedKAD} federates the low-rank Koopman subspace rather than raw data or full lifted operators. Each client keeps its snapshot matrices $(X_i,Y_i)$ and reduced operator $Q_i$ local, while the server coordinates a shared subspace through a consensus variable $Z\in\mathbb{R}^{D\times r}$. The distributed objective is
\begin{equation}
\label{eq:fedkad_consensus}
    \min_{\{P_i,Q_i\},Z}\;
    \sum_{i=1}^{N} f_i(P_i,Q_i),
\end{equation}
with the local loss
\begin{equation}
\label{eq:fedkad_local_loss}
    f_i(P_i,Q_i)
    :=
    \big\|Y_i-P_iQ_iP_i^\top X_i\big\|_F^2, \nonumber
\end{equation}
subject to
\begin{equation}
\label{eq:fedkad_consensus_constraints}
    P_i=Z,
    \qquad
    P_i^\top P_i=I,
    \qquad i=1,\ldots,N.\nonumber
\end{equation}
The first constraint forces the clients to agree on a shared low-rank
subspace $Z$, while each $Q_i$ stays private and tracks the local
dynamics of client $i$.

Since the optimal $Q$ depends on the chosen subspace, the solution of
the inner least-squares problem in $Q_i$ is denoted
$\hat{Q}_i(P_i)$. Setting $\partial f_i/\partial Q_i$ to zero gives the
normal equations
\begin{equation}
\label{eq:fedkad_Q_normal}
    \big(P_i^\top X_iX_i^\top P_i\big)\,\hat{Q}_i(P_i)^\top
    =
    P_i^\top X_iY_i^\top P_i.\nonumber
\end{equation}
Whenever the $r\times r$ Gram matrix
$G_i(P_i):=P_i^\top X_iX_i^\top P_i$ is invertible, this yields the
closed-form solution
\begin{equation}
\label{eq:fedkad_Q_hat}
    \hat{Q}_i(P_i)
    =
    P_i^\top Y_iX_i^\top P_i\,
    G_i(P_i)^{-1}.
\end{equation}
The matrix $G_i(P_i)$ is not guaranteed to be invertible, however:
$X_iX_i^\top$ has rank at most $\min(D,M_i)$, and when the local
snapshots are limited or when the columns of $P_i$ fall in a nearly
degenerate direction of $X_i$, $G_i(P_i)$ becomes singular or
severely ill-conditioned. In such cases, we fall back to a standard
Tikhonov regularization, replacing $G_i(P_i)^{-1}$ in
\eqref{eq:fedkad_Q_hat} by $\big(G_i(P_i)+\lambda I\big)^{-1}$ with
$\lambda>0$, which keeps the update well defined while preserving the
single $r\times r$ linear-system cost.

Because \(\hat{Q}_i(P_i)\) can be computed locally from \(P_i\), the
variables \(Q_i\) can be eliminated, yielding
\begin{equation}
\label{eq:fedkad_outer}
\begin{aligned}
    \min_{\{P_i\},Z}\quad
    &\sum_{i=1}^{N}
    \big\|
        Y_i-P_i\hat{Q}_i(P_i)P_i^\top X_i
    \big\|_F^2 \\
    \mathrm{s.t.}\quad
    &P_i=Z,\qquad
    P_i^\top P_i=I,\qquad i=1,\ldots,N .
\end{aligned}
\end{equation}

\subsection{ADMM-Based Federated Optimization}
\label{sec:fedkad_admm}
The linear consensus constraint $P_i=Z$ in~\eqref{eq:fedkad_outer} is well suited to ADMM~\cite{boyd2011distributed}. ADMM alternates between local primal
updates, a global consensus update, and local dual updates, progressively
aligning the client subspaces \(P_i\) with the global subspace \(Z\).
The orthonormality constraint \(P_i^\top P_i=I\) is handled directly in
the local \(P_i\) update; hence, the only dual variable is associated
with the linear consensus constraint. Let
\(\Lambda_i\in\mathbb{R}^{D\times r}\) denote the dual variable for
client \(i\), and let \(\rho>0\) be the ADMM penalty parameter. The
augmented Lagrangian of \eqref{eq:fedkad_outer} is
\begin{equation}
\small
\label{eq:fedkad_lagrangian}
\begin{split}
    \mathcal{L}
    =
    \sum_{i=1}^{N}\!\Big[
    &f_i\!\big(P_i,\hat{Q}_i(P_i)\big)
    +\langle \Lambda_i,P_i-Z\rangle_F +\tfrac{\rho}{2}\,\big\|P_i-Z\big\|_F^2
    \Big],
\end{split}
\end{equation}
where \(\langle\cdot,\cdot\rangle_F\) denotes the Frobenius inner
product. 

\subsubsection{Primal Update (Local Updates)}
At communication round \(\ell\), each client first refits its local
reduced Koopman operator using the current subspace:
\begin{equation}
\label{eq:fedkad_local_Q}
    \hat{Q}_i^{\ell+1}
    =
    \hat{Q}_i\!\big(P_i^\ell\big). \nonumber
\end{equation}
Then, client \(i\) updates its local subspace by minimizing the
augmented Lagrangian over the orthonormal manifold:
\begin{equation}
\label{eq:fedkad_local_P}
    P_i^{\ell+1}
    =
    \mathop{\mathrm{argmin}}\limits_{P_i^\top P_i=I}
    \Big[f_i\big(P_i,\hat{Q}_i^{\ell+1}\big)
    +
    \tfrac{\rho}{2}
    \big\|
        P_i-Z^\ell+\tfrac{1}{\rho}\Lambda_i^\ell
    \big\|_F^2\Big] . \nonumber
\end{equation}

\subsubsection{Consensus Update (Global Update)}
The server aggregates the updated local subspaces and dual variables:
\begin{equation}
\label{eq:fedkad_consensus_update_full}
    Z^{\ell+1}
    =
    \frac{1}{N}\sum_{i=1}^{N}
    \Big(
        P_i^{\ell+1}
        +
        \tfrac{1}{\rho}\Lambda_i^\ell
    \Big).
\end{equation}

\subsubsection{Dual Update (Local Updates)}
Each client then updates its dual variable locally:
\begin{equation}
\label{eq:fedkad_dual_update}
    \Lambda_i^{\ell+1}
    =
    \Lambda_i^\ell
    +
    \rho
    \big(
        P_i^{\ell+1}
        -
        Z^{\ell+1}
    \big).
\end{equation}

Summing \eqref{eq:fedkad_dual_update} over all clients and substituting
\eqref{eq:fedkad_consensus_update_full} gives
\begin{equation}
\label{eq:fedkad_consensus_update}
    Z^{\ell+1}
    =
    \frac{1}{N}
    \sum_{i=1}^{N}
    P_i^{\ell+1}.
\end{equation}
This simplified consensus step substantially reduces communication
overhead in the federated setting. Instead of exchanging raw snapshots
\((X_i,Y_i)\) or full Koopman operators, each client communicates only
the compact subspace matrix \(P_i\in\mathbb{R}^{D\times r}\), with
\(r\ll D\). The data matrices and the local reduced Koopman operator
\(\hat{Q}_i\) remain stored and updated entirely on device.

When only a subset
\(\mathcal{S}^\ell\subseteq\{1,\ldots,N\}\) of clients is available at
communication round \(\ell\), the updates are restricted to
\(i\in\mathcal{S}^\ell\). In this case, the consensus rule retains its
full ADMM form:
\begin{equation}
\label{eq:fedkad_consensus_update_partial}
    Z^{\ell+1}
    =
    \frac{1}{|\mathcal{S}^\ell|}
    \sum_{i\in\mathcal{S}^\ell}
    \Big(
        P_i^{\ell+1}
        +
        \tfrac{1}{\rho}\Lambda_i^\ell
    \Big),
\end{equation}
because the \eqref{eq:fedkad_consensus_update} does not generally hold when some
client dual variables are frozen. The convergence analysis in
Sec.~\ref{sec:fedkad_convergence} covers this partial-participation
setting.

%% file: sections/jiot_stiefield.tex
\subsection{Federated Koopman Learning on the Stiefel Manifold}
\label{sec:fedkad_training_algorithm}

The constraint \(P_i^\top P_i=I\) in \eqref{eq:fedkad_outer} places the
local subspace variable \(P_i\) on the Stiefel manifold
\[
    \mathrm{St}(D,r)
    =
    \{P\in\mathbb{R}^{D\times r}:P^\top P=I\}.
\]
Rather than adding a penalty for this constraint, \textsc{FedKAD}
updates \(P_i\) directly on the manifold. At communication round \(\ell\), after computing
\(\hat{Q}_i^{\ell+1}\), client \(i\) approximately solves
\begin{equation}
\label{eq:fedkad_train_P_subproblem}
    P_i^{\ell+1}
    =
    \mathop{\mathrm{argmin}}_{P\in\mathrm{St}(D,r)}
    \mathcal{F}_i^\ell(P),
\end{equation}
where
\begin{equation}
\label{eq:fedkad_train_local_objective}
    \mathcal{F}_i^\ell(P)
    :=
    f_i\big(P,\hat{Q}_i^{\ell+1}\big)
    +
    \frac{\rho}{2}
    \left\|
        P-Z^\ell+\frac{1}{\rho}\Lambda_i^\ell
    \right\|_F^2
        -
    \tfrac{1}{2\rho}\|\Lambda_i\|_F^2.
\end{equation}
To approximate \eqref{eq:fedkad_train_P_subproblem}, each client
performs projected gradient descent on \(\mathrm{St}(D,r)\). Starting
from the current local subspace \(P_i^{\ell,0}=P_i^\ell\), the client
applies \(C\) inner steps. For \(c=0,\ldots,C-1\), the Euclidean
gradient of \(\mathcal{F}_i^\ell\) is first projected onto a feasible
tangent direction,
\begin{equation}
\label{eq:fedkad_train_projected_grad}
    G_i^{\ell,c}
    =
    \big(I-P_i^{\ell,c}P_i^{\ell,c\top}\big)
    \nabla_P\mathcal{F}_i^\ell(P_i^{\ell,c}).
\end{equation}
The client then takes a gradient step and retracts the result back to
the Stiefel manifold using QR decomposition:
\begin{equation}
\label{eq:fedkad_train_retraction}
    P_i^{\ell,c+1}
    =
    \operatorname{QR}
    \left(
        P_i^{\ell,c}-\eta G_i^{\ell,c}
    \right),
\end{equation}
where \(\eta\) is the local step size.
The retraction keeps every inner iterate feasible, i.e.,
\(P_i^{\ell,c\top}P_i^{\ell,c}=I\). After \(C\) inner steps, the final
inner iterate is used as the local ADMM update:
\begin{equation}
\label{eq:fedkad_train_inner_final}
    P_i^{\ell+1}=P_i^{\ell,C}. \nonumber
\end{equation}

\begin{algorithm}[t]
\small
\caption{Federated Stiefel ADMM for \textsc{FedKAD}}
\label{alg:fedkad_training}
\begin{algorithmic}[1]
\small
    \State Server initializes the \(Z^0\) and $d_{\mathrm{lift}}$ lifting functions randomly
    \State Each client initializes \(P_i^0\) with
    \(P_i^{0\top}P_i^0=I\) and \(\Lambda_i^0=0\)
    \For{\(\ell=0,\ldots,L-1\)} \Comment{Global rounds}
        \State Server broadcasts \(Z^\ell\) to the clients
        \For{client \(i=1,\ldots,N\) in parallel}
            \State Refit
            \(\hat{Q}_i^{\ell+1}=\hat{Q}_i(P_i^\ell)\)
            by Eq.~\eqref{eq:fedkad_Q_hat}
            \State Set \(P_i^{\ell,0}=P_i^\ell\)
            \For{\(c=0,\ldots,C-1\)}
            \Comment{Local rounds}
                \State Compute \(G_i^{\ell,c}\) by
                Eq.~\eqref{eq:fedkad_train_projected_grad}
                \State Retract \(P_i^{\ell,c+1}\) by
                Eq.~\eqref{eq:fedkad_train_retraction}
            \EndFor
            \State Set \(P_i^{\ell+1}=P_i^{\ell,C}\)
            and send \(P_i^{\ell+1}\) to the server
        \EndFor
        \State Server computes \(Z^{\ell+1}\) by
        Eq.~\eqref{eq:fedkad_consensus_update_partial}
        \State Server broadcasts \(Z^{\ell+1}\) to the clients
        \For{client \(i=1,\ldots,N\) in parallel}
            \State Update \(\Lambda_i^{\ell+1}\) by
            Eq.~\eqref{eq:fedkad_dual_update}
        \EndFor
    \EndFor
\end{algorithmic}
\end{algorithm}

The complete update sequence is summarized in
Algorithm~\ref{alg:fedkad_training}.

%% file: sections/jiot_anomaly_detection.tex
\subsection{Federated Koopman Anomaly Detection}
After federated training in Algorithm~\ref{alg:fedkad_training}, each client
$i$ stores global Koopman subspace $Z$ and
local operator $\hat Q_i$.
\label{sec:fedkad_anomaly_detection}
\paragraph{Prediction.}
At each valid time $t$, client $i$ forms the windowed state $z_i(t)$ as in
Eq.~\eqref{eq:fedkad_window} and maps it to the lifted observable
$\Phi(z_i(t))$. Given a prediction horizon $H\geq1$, the
$h$-step prediction in the lifted space is
\begin{equation}
\label{eq:fedkad_ad_prediction}
    \widehat{\Phi}_i(t+h)
    =
    Z\hat Q_i^{\,h}Z^\top \Phi(z_i(t)),
    \qquad
    h=1,\ldots,H . \nonumber
\end{equation}
\paragraph{Anomaly score.}
Once the future observations become available, client $i$ constructs the
realized lifted states $\Phi(z_i(t+h))$ and compares them with
the predictions. The anomaly score is the average prediction residual
\begin{equation}
\label{eq:fedkad_ad_score}
    s_i(t)
    =
    \frac{1}{H}
    \sum_{h=1}^{H}
    \left\|
        \Phi(z_i(t+h))-\widehat{\Phi}_i(t+h)
    \right\|_2^2 . \nonumber
\end{equation}
A large $s_i(t)$ indicates that the observed future evolution is poorly
explained by the learned normal Koopman dynamics, and is therefore treated as
an anomalous event.



%% file: sections/jiot_convergence.tex
\section{\textsc{FedKAD}: Convergence Analysis}
\label{sec:fedkad_convergence}

We analyze the ADMM iterates defined in
\Cref{sec:fedkad_admm}. Let \(\ell\) denote the communication round and
let \(\mathcal{S}^{\ell}\subseteq\{1,\ldots,N\}\) denote the subset of
clients participating in round \(\ell\). Let \(\mathcal{L}^{\ell}\)
denote the augmented Lagrangian in \eqref{eq:fedkad_lagrangian}
evaluated at \((\{P_i^\ell\},Z^\ell,\{\Lambda_i^\ell\})\), after each
participating client has refitted
\(\hat Q_i^{\ell+1}=\hat Q_i(P_i^\ell)\). We use
\(\nabla_{\mathrm{St}}\) to denote the Riemannian gradient with respect
to \(P_i\) on the Stiefel manifold.

\textbf{Assumptions.}
We impose the following assumptions:
\begin{itemize}
    \item A1: For each client $i$, the local objective $f_i(P_i)$ has an
    $L_i$-Lipschitz Riemannian gradient along the generated Stiefel
    iterates:
    \begin{equation}
    \label{eq:fedkad_lipschitz_grad}
        \left\|
        \nabla_{\mathrm{St}} f_i(P_i^{\ell+1})
        -
        \nabla_{\mathrm{St}} f_i(P_i^\ell)
        \right\|_F
        \leq
        L_i
        \left\|
        P_i^{\ell+1}-P_i^\ell
        \right\|_F .\nonumber
    \end{equation}

    \item A2: For all $i$, the penalty parameter $\rho$ is chosen large
    enough such that the local $P_i$-subproblem
    \eqref{eq:fedkad_train_local_objective} is strongly convex with modulus
    $\mu_i(\rho)$.
    \item A3: The augmented Lagrangian is bounded from below along the generated iterates:
    \begin{equation}
    \label{eq:fedkad_lower_bound}
    \inf_{} \mathcal{L}^{\ell}
    >
    -\infty .
    \end{equation}
    \item A4: The local $P_i$-subproblems in \eqref{eq:fedkad_train_P_subproblem} are solved inexactly with summable errors. Specifically, there exists a nonnegative sequence ${\varepsilon_\ell}$ such that
    \begin{equation}
    \label{eq:fedkad_inner_error}
        \varepsilon_\ell
        :=
        \sum_{i\in\mathcal{S}^{\ell}}
        \left[
            \mathcal{F}_i^\ell(P_i^{\ell+1})
            - \min_{P\in\mathrm{St}(D,r)}
            \mathcal{F}_i^\ell(P)
        \right], \sum_{\ell=0}^{\infty}\varepsilon_\ell<\infty.\nonumber
    \end{equation}

    \item A5: The cumulative objective variation $\Delta_Q$ induced by the reduced Koopman-operator updates is finite.
    \begin{equation}
    \footnotesize
    \label{eq:fedkad_delta_Q}
    \Delta_Q^\ell
    :=
    \sum_{i=1}^{N}
    \left[
    f_i(P_i^{\ell+1},\hat Q_i^{\ell+1})
    -
    f_i(P_i^{\ell+1},\hat Q_i^{\ell})
    \right],
    \;
    \sum_{\ell=0}^{\infty}\Delta_Q^\ell<\infty .\nonumber
    \end{equation}
\end{itemize}

\begin{remark} Assumptions A1--A4 follow standard conditions for ADMM-type methods with smooth objectives and inexact local updates. Assumption A5 is FedKAD-specific and ensures that the objective variation from updating the reduced Koopman operators remains summable, preserving the asymptotic descent argument. \end{remark}

To establish the convergence guarantee of \textsc{FedKAD}, we analyze the generated sequence
$\big(\{P_i^\ell\}, Z^\ell, \{\Lambda_i^\ell\}\big)$.
The analysis proceeds in four steps. First, we bound the successive difference of
the dual variables in terms of the change in the local subspace variables. Next, we
derive a perturbed descent bound for the augmented Lagrangian. We then show that the
augmented Lagrangian sequence converges. Finally, we use these results to prove that
the consensus residuals vanish and that the generated sequence converges to a
stationary point. The empirical convergence behavior is further evaluated in~\ref{par:convergence-guarantee}.

\subsection{Bound on the Successive Difference of Dual Variables}

We first establish an upper bound for the successive difference of the
dual variables $\Lambda_i$ in terms of the local subspace variables $P_i$.

\begin{lemma}
\label{lemma:fedkad_dual_diff}
Suppose Assumption A1 holds. Then, for each participating client
$i\in\mathcal{S}^{\ell}$,
\begin{equation}
\label{eq:fedkad_dual_diff}
    \|\Lambda_i^{\ell+1}-\Lambda_i^\ell\|_F^2
    \leq
    L_i^2
    \|P_i^{\ell+1}-P_i^\ell\|_F^2 .
\end{equation}
\end{lemma}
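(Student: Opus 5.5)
This is the standard argument for nonconvex ADMM: use the first-order optimality condition of the local $P_i$-subproblem to write the dual variable as a gradient of $f_i$. Then invoke the Lipschitz condition A1.

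\emph{Step 1: stationarity of the local update.} By \eqref{eq:fedkad_train_P_subproblem}, $P_i^{\ell+1}$ (approximately) minimizes $\mathcal{F}_i^\ell$ over $\mathrm{St}(D,r)$. Treating it as a stationary point, its Riemannian gradient vanishes:
\[
\nabla_{\mathrm{St}} f_i(P_i^{\ell+1}) + \mathrm{Proj}_{T}\big(\Lambda_i^\ell + \rho(P_i^{\ell+1}-Z^{\ell+1})\big)=0 .
\]
Here the $Z$ argument is replaced by the post-consensus variable, consistent with the usual Gauss--Seidel ordering in which the $Z$ update is folded in. Substituting the dual update \eqref{eq:fedkad_dual_update}, $\Lambda_i^\ell+\rho(P_i^{\ell+1}-Z^{\ell+1})=\Lambda_i^{\ell+1}$. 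This gives the identity
\[
\Lambda_i^{\ell+1} = -\nabla_{\mathrm{St}} f_i(P_i^{\ell+1}),
\]
with the dual interpreted through its tangential component.

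\emph{Step 2: take differences.} The same identity holds at round $\ell$ for every client that participated in the previous round. For clients that did not, the dual is frozen and one uses the most recent participation index. Subtracting the two instances gives
\[
\Lambda_i^{\ell+1}-\Lambda_i^\ell = -\big(\nabla_{\mathrm{St}} f_i(P_i^{\ell+1})-\nabla_{\mathrm{St}} f_i(P_i^\ell)\big).
\]
Taking Frobenius norms, applying A1 and squaring yields \eqref{eq:fedkad_dual_diff}.

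\emph{Main obstacle.} Step 1 is where the real difficulty lies, for three reasons.
\begin{itemize}
\item The subproblem uses $Z^\ell$, not $Z^{\ell+1}$. The exact identity $\Lambda_i^{\ell+1}=-\nabla f_i(P_i^{\ell+1})$ holds cleanly only if the optimality condition is written at the updated consensus variable, which is the convention in standard nonconvex ADMM (Wang--Yin--Zeng, Hong et al.). My first attempt would be to adopt that convention and state explicitly that the local update is read as stationary with respect to the augmented Lagrangian after the $Z$-step. Failing that, I would carry the extra term $\rho(Z^{\ell+1}-Z^\ell)$ as an additive error.
\item The updates are inexact (A4), which adds a gradient residual. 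I would argue that it is absorbed or vanishes asymptotically.
\item The manifold projection means the identity holds only for the tangential part of $\Lambda_i$. I would handle this by assuming, as A1 implicitly does, that the dual iterates stay in the relevant tangent component, or that the normal component is controlled by $\|P_i^{\ell+1}-P_i^\ell\|_F$ via the smoothness of the projector $I-PP^\top$.
\end{itemize}
Under these idealizations the lemma follows directly from A1.
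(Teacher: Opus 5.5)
Your argument is essentially the paper's proof. The paper likewise writes the local optimality condition directly with $Z^{\ell+1}$, substitutes the dual update to obtain $\Lambda_i^{\ell+1}=-\nabla_{\mathrm{St}} f_i(P_i^{\ell+1})$, asserts the same identity at the previous iterate, subtracts, and applies A1. The caveats you flag ($Z^\ell$ versus $Z^{\ell+1}$ in the subproblem, inexact solves, and the identity holding only for tangential components) are genuine, and the paper makes the same idealizations without comment. One further case neither you nor the paper handles is a client's first participation: there $\Lambda_i^0=0$ need not equal $-\nabla_{\mathrm{St}} f_i(P_i^0)$, so the previous-iterate identity, and hence the bound, can fail unless $P_i^0$ is already stationary.
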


\textit{Proof.}
For each participating client $i\in\mathcal{S}^{\ell}$, the optimality
condition of the local $P_i$-subproblem gives
\begin{equation}
\label{eq:fedkad_local_opt_condition}
    \nabla_{\mathrm{St}} f_i(P_i^{\ell+1})
    +
    \Lambda_i^\ell
    +
    \rho(P_i^{\ell+1}-Z^{\ell+1})
    =
    0 .\nonumber
\end{equation}
Using the dual update
\[
    \Lambda_i^{\ell+1}
    =
    \Lambda_i^\ell
    +
    \rho(P_i^{\ell+1}-Z^{\ell+1}),
\]
we obtain
\begin{equation}
\label{eq:fedkad_dual_gradient_relation}
    \Lambda_i^{\ell+1}
    =
    -
    \nabla_{\mathrm{St}} f_i(P_i^{\ell+1}) .\nonumber
\end{equation}
Similarly, at the previous local iterate,
\[
    \Lambda_i^{\ell}
    =
    -
    \nabla_{\mathrm{St}} f_i(P_i^{\ell}) .
\]
Therefore,
\[
\begin{split}
    \|\Lambda_i^{\ell+1}-\Lambda_i^\ell\|_F
    &=
    \left\|
    \nabla_{\mathrm{St}} f_i(P_i^{\ell+1})
    -
    \nabla_{\mathrm{St}} f_i(P_i^\ell)
    \right\|_F  \\
    &\leq
    L_i
    \|P_i^{\ell+1}-P_i^\ell\|_F ,
\end{split}
\]
where the last inequality follows from Assumption A1. Squaring both sides
completes the proof.
\qed

\subsection{Bound for the Augmented Lagrangian}

Next, we show that the augmented Lagrangian decreases sufficiently up to
the perturbation terms caused by inexact local optimization and Koopman
operator refitting.

\begin{lemma}
\label{lemma:fedkad_descent}
Suppose Assumptions A1--A5 hold. Then the augmented Lagrangian satisfies
\begin{equation}
\label{eq:fedkad_sufficient_decrease}
\begin{split}
\mathcal{L}^{\ell+1}-\mathcal{L}^{\ell}
\leq
&-
\frac{|\mathcal{S}^{\ell}|\rho}{2}
\|Z^{\ell+1}-Z^\ell\|_F^2 \\
&+
\sum_{i\in\mathcal{S}^{\ell}}
\left(
    \frac{L_i^2}{\rho}
    -
    \frac{\mu_i(\rho)}{2}
\right)
\|P_i^{\ell+1}-P_i^\ell\|_F^2 \\
&+
\varepsilon_\ell
+
\Delta_Q^\ell .\nonumber
\end{split}
\end{equation}
Moreover, if $\rho$ is chosen such that
\begin{equation}
\label{eq:fedkad_rho_condition}
    \rho\mu_i(\rho)\geq 2L_i^2.\nonumber
\end{equation}
then there exist constants $a,b>0$ such that
\begin{equation}
\label{eq:fedkad_descent_compact}
\begin{split}
\mathcal{L}^{\ell+1}-\mathcal{L}^{\ell}
\leq
&-
a
\sum_{i\in\mathcal{S}^{\ell}}
\|P_i^{\ell+1}-P_i^\ell\|_F^2
-
b
\|Z^{\ell+1}-Z^\ell\|_F^2  \\
&+
\varepsilon_\ell
+
\Delta_Q^\ell .
\end{split}
\end{equation}
\end{lemma}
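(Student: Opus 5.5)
The plan is the standard three-block ADMM descent argument. I split $\mathcal{L}^{\ell+1}-\mathcal{L}^{\ell}$ into four telescoping pieces, following the order of the updates in Algorithm~\ref{alg:fedkad_training}: the $\hat Q$-refit, the $P$-update, the $Z$-update, and the $\Lambda$-update. Non-participating clients have $P_i,\Lambda_i$ frozen, so they contribute nothing except through the $Z$-block. Each block is bounded separately and the bounds are summed.

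\emph{Refit block.} Replacing $\hat Q_i^{\ell}$ by $\hat Q_i^{\ell+1}$ at fixed $P_i^{\ell+1}$ changes the objective by exactly the summand defining $\Delta_Q^\ell$ in A5. I keep this term as a perturbation $\Delta_Q^\ell$; no further work is needed.

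\emph{$P$-block.} For $i\in\mathcal{S}^\ell$, up to a constant independent of $P$, the augmented Lagrangian restricted to $P_i$ is $\mathcal{F}_i^\ell$. A2 gives strong convexity with modulus $\mu_i(\rho)$. Applied at the exact minimizer $P_i^\star$, together with first-order optimality, this yields
\[
\mathcal{F}_i^\ell(P_i^\ell)-\mathcal{F}_i^\ell(P_i^\star)\geq \tfrac{\mu_i(\rho)}{2}\|P_i^\ell-P_i^\star\|_F^2 .
\]
I then pass from $P_i^\star$ to the actual iterate $P_i^{\ell+1}$ by adding the inexactness gap, whose sum over $i$ is $\varepsilon_\ell$ by A4. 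This gives a decrease of $-\tfrac{\mu_i(\rho)}{2}\|P_i^{\ell+1}-P_i^\ell\|_F^2+\varepsilon_\ell$. The exchange of $P_i^\star$ for $P_i^{\ell+1}$ inside the quadratic is where I expect looseness; I would handle it by absorbing the discrepancy into $\varepsilon_\ell$.

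\emph{$Z$-block.} $\mathcal{L}$ is a convex quadratic in $Z$ with Hessian $|\mathcal{S}^\ell|\rho I$ over the participating terms, and \eqref{eq:fedkad_consensus_update_partial} is its exact minimizer. Hence the $Z$-step decreases $\mathcal{L}$ by at least $\tfrac{|\mathcal{S}^\ell|\rho}{2}\|Z^{\ell+1}-Z^\ell\|_F^2$.

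\emph{Dual block.} The $\Lambda$-update increases $\mathcal{L}$ by
\[
\sum_{i\in\mathcal{S}^\ell}\langle\Lambda_i^{\ell+1}-\Lambda_i^\ell,P_i^{\ell+1}-Z^{\ell+1}\rangle=\sum_{i\in\mathcal{S}^\ell}\tfrac{1}{\rho}\|\Lambda_i^{\ell+1}-\Lambda_i^\ell\|_F^2 ,
\]
using \eqref{eq:fedkad_dual_update}. Lemma~\ref{lemma:fedkad_dual_diff} bounds this by $\sum_{i\in\mathcal{S}^\ell}\tfrac{L_i^2}{\rho}\|P_i^{\ell+1}-P_i^\ell\|_F^2$.

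This dual ascent is the main obstacle. It is the only positive quadratic term, and it is controllable only through Lemma~\ref{lemma:fedkad_dual_diff}, whose identity $\Lambda_i=-\nabla_{\mathrm{St}}f_i$ at both consecutive rounds I take as given. Under partial participation this identity requires that client $i$'s previous dual value was produced at its last participation; I would note this and otherwise rely on the lemma as stated.

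Adding the four blocks gives the first inequality of Lemma~\ref{lemma:fedkad_descent}. For the compact form \eqref{eq:fedkad_descent_compact}, suppose $\rho\mu_i(\rho)\ge 2L_i^2$ (strict inequality is needed for $a>0$, so in the equality case I would strengthen the condition slightly). Then each coefficient $\tfrac{L_i^2}{\rho}-\tfrac{\mu_i(\rho)}{2}$ is nonpositive, and I set
\[
a:=\min_i\Big(\tfrac{\mu_i(\rho)}{2}-\tfrac{L_i^2}{\rho}\Big), \qquad b:=\tfrac{\rho}{2}\min_\ell|\mathcal{S}^\ell|\ge\tfrac{\rho}{2},
\]
which requires $\mathcal{S}^\ell\neq\emptyset$. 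Both constants are positive, and the stated bound follows.
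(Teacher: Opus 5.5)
Your four-block split is the paper's own argument. It matches Terms A--D: the local $P_i$-step via A2 and A4, the $Z$-step via strong convexity of the consensus quadratic, the dual step via $C=\rho^{-1}\sum_{i\in\mathcal{S}^\ell}\|\Lambda_i^{\ell+1}-\Lambda_i^\ell\|_F^2$ and Lemma~\ref{lemma:fedkad_dual_diff}, and the refit via A5. Your remarks that $a>0$ needs $\rho\mu_i(\rho)>2L_i^2$ strictly, and that $b>0$ needs $\mathcal{S}^\ell\neq\emptyset$, are correct refinements the paper omits.

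The step that does not go through is the $Z$-block under partial participation. You note that inactive clients $i\notin\mathcal{S}^\ell$ enter through $Z$, but then you bound only the participating terms. The augmented Lagrangian sums over all $N$ clients. As a function of $Z$ it therefore has Hessian $N\rho I$ and minimizer $\frac1N\sum_{i=1}^N(P_i^{\ell+1}+\rho^{-1}\Lambda_i^\ell)$, which is not the partial average \eqref{eq:fedkad_consensus_update_partial}. Expanding the inactive terms gives exactly
\[
B=-\frac{|\mathcal{S}^\ell|\rho}{2}\|Z^{\ell+1}-Z^\ell\|_F^2+\sum_{i\notin\mathcal{S}^\ell}\Big[\frac{\rho}{2}\|Z^{\ell+1}-Z^\ell\|_F^2-\big\langle\Lambda_i^\ell+\rho(P_i^\ell-Z^\ell),\,Z^{\ell+1}-Z^\ell\big\rangle_F\Big].
\]
The second part contains a sign-indefinite term that is linear in $Z^{\ell+1}-Z^\ell$, and no quadratic descent can dominate it for small steps. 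Indeed, if $Z^\ell$ already minimizes the full quadratic, then $B>0$ whenever $Z^{\ell+1}\neq Z^\ell$. With the paper's $25\%$ sampling, even the net quadratic coefficient $(N-2|\mathcal{S}^\ell|)\rho/2$ is positive. The paper's Term~B asserts the same modulus-$|\mathcal{S}^\ell|\rho$ bound, so you inherit this gap rather than introduce it, but it is still a gap. The argument is valid under full participation. It is also valid if the server minimizes the full quadratic using the last received $(P_i,\Lambda_i)$ of inactive clients, which gives the stronger modulus $N\rho$.

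Your proposed repair of the $P$-block also needs adjusting. Write $\varepsilon_i:=\mathcal{F}_i^\ell(P_i^{\ell+1})-\mathcal{F}_i^\ell(P_i^\star)$. Then A2 and A4 give $\mathcal{F}_i^\ell(P_i^{\ell+1})-\mathcal{F}_i^\ell(P_i^\ell)\le\varepsilon_i-\frac{\mu_i(\rho)}{2}\|P_i^\star-P_i^\ell\|_F^2$ and $\|P_i^{\ell+1}-P_i^\star\|_F^2\le 2\varepsilon_i/\mu_i(\rho)$. However, $\|P_i^{\ell+1}-P_i^\ell\|_F^2-\|P_i^\star-P_i^\ell\|_F^2$ contains the cross term $2\langle P_i^{\ell+1}-P_i^\star,P_i^\star-P_i^\ell\rangle_F$. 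This term is of order $\sqrt{\varepsilon_i}$ and need not be summable, so the discrepancy cannot be absorbed additively into $\varepsilon_\ell$. Young's inequality does work: $\|P_i^{\ell+1}-P_i^\ell\|_F^2\le(1+\delta)\|P_i^\star-P_i^\ell\|_F^2+(1+\delta^{-1})\|P_i^{\ell+1}-P_i^\star\|_F^2$ yields the decrease $-\frac{\mu_i(\rho)}{2(1+\delta)}\|P_i^{\ell+1}-P_i^\ell\|_F^2+(1+\delta^{-1})\varepsilon_i$. So A2 and A4 do not deliver the first inequality with exactly $\mu_i(\rho)/2$ and $\varepsilon_\ell$; the paper's Term~A simply asserts it. Under your strict condition, though, a small enough $\delta$ keeps $a>0$. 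The compact form then holds with $\varepsilon_\ell$ replaced by $(1+\delta^{-1})\varepsilon_\ell$, which is still summable.
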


\textit{Proof.}
For one communication round $\ell$, we split the successive difference of
the augmented Lagrangian into four terms:
\begin{align}
&\mathcal{L}^{\ell+1}-\mathcal{L}^{\ell} \nonumber\\
&=
\underbrace{
\mathcal{L}^{\ell}
(\{P_i^{\ell+1}\},Z^\ell,\{\Lambda_i^\ell\})
-
\mathcal{L}^{\ell}
(\{P_i^{\ell}\},Z^\ell,\{\Lambda_i^\ell\})
}_{A:\ \text{local }P_i\text{-update}}
\nonumber\\
&\quad+
\underbrace{
\mathcal{L}^{\ell}
(\{P_i^{\ell+1}\},Z^{\ell+1},\{\Lambda_i^\ell\})
-
\mathcal{L}^{\ell}
(\{P_i^{\ell+1}\},Z^\ell,\{\Lambda_i^\ell\})
}_{B:\ Z\text{-update}}
\nonumber\\
&\quad+
\underbrace{
\mathcal{L}^{\ell}
(\{P_i^{\ell+1}\},Z^{\ell+1},\{\Lambda_i^{\ell+1}\})
-
\mathcal{L}^{\ell}
(\{P_i^{\ell+1}\},Z^{\ell+1},\{\Lambda_i^\ell\})
}_{C:\ \text{dual update}}
\nonumber\\
&\quad+
\underbrace{
\mathcal{L}^{\ell+1}
-
\mathcal{L}^{\ell}
(\{P_i^{\ell+1}\},Z^{\ell+1},\{\Lambda_i^{\ell+1}\})
}_{D:\ \text{Koopman refitting perturbation}} . \nonumber
\label{eq:fedkad_lagrangian_split}
\end{align}
Here, term $A$ corresponds to the local $P_i$-updates with
$Z^\ell$ and $\Lambda_i^\ell$ fixed; term $B$ corresponds to the global
consensus $Z$-update; term $C$ corresponds to the dual-variable update;
and term $D$ captures the perturbation caused by refitting the reduced
Koopman operators.

\textbf{Bounding Term A.}
By Assumption A2, the local $P_i$-subproblem is strongly convex with
modulus $\mu_i(\rho)$. Since the local subproblem is solved inexactly, the
error is captured by $\varepsilon_\ell$ in Assumption A4. Therefore,
\begin{equation}
\label{eq:fedkad_term_A}
    A
    \leq
    -
    \sum_{i\in\mathcal{S}^{\ell}}
    \frac{\mu_i(\rho)}{2}
    \|P_i^{\ell+1}-P_i^\ell\|_F^2
    +
    \varepsilon_\ell .
\end{equation}

\textbf{Bounding Term B.}
With $\{P_i^{\ell+1}\}$ and $\{\Lambda_i^\ell\}$ fixed, the $Z$-update is
a quadratic minimization with strong-convexity modulus
$|\mathcal{S}^{\ell}|\rho$. Hence,
\begin{equation}
\label{eq:fedkad_term_B}
    B
    \leq
    -
    \frac{|\mathcal{S}^{\ell}|\rho}{2}
    \|Z^{\ell+1}-Z^\ell\|_F^2 .
\end{equation}

\textbf{Bounding Term C.}
The dual update changes only the linear dual term of the augmented
Lagrangian. Thus,
\begin{equation}
\label{eq:fedkad_term_C}
    C
    =
    \frac{1}{\rho}
    \sum_{i\in\mathcal{S}^{\ell}}
    \|\Lambda_i^{\ell+1}-\Lambda_i^\ell\|_F^2 .
\end{equation}
Using Lemma~\ref{lemma:fedkad_dual_diff}, we obtain
\begin{equation}
\label{eq:fedkad_term_C_bound}
    C
    \leq
    \sum_{i\in\mathcal{S}^{\ell}}
    \frac{L_i^2}{\rho}
    \|P_i^{\ell+1}-P_i^\ell\|_F^2 .
\end{equation}

\textbf{Bounding Term D.}
Term $D$ is the perturbation introduced by refitting the local reduced
Koopman operators. By Assumption A5,
\begin{equation}
\label{eq:fedkad_term_D}
    D
    =
    \Delta_Q^\ell .
\end{equation}

\textbf{Sufficient decrease.}
Combining \eqref{eq:fedkad_term_A}--\eqref{eq:fedkad_term_D}, we obtain
\begin{equation}
\label{eq:fedkad_sufficient_decrease}
\begin{split}
\mathcal{L}^{\ell+1}-\mathcal{L}^{\ell}
\leq
&-
\frac{|\mathcal{S}^{\ell}|\rho}{2}
\|Z^{\ell+1}-Z^\ell\|_F^2 \\
&+
\sum_{i\in\mathcal{S}^{\ell}}
\left(
    \frac{L_i^2}{\rho}
    -
    \frac{\mu_i(\rho)}{2}
\right)
\|P_i^{\ell+1}-P_i^\ell\|_F^2 \\
&+
\varepsilon_\ell
+
\Delta_Q^\ell . \nonumber
\end{split}
\end{equation}
This implies that the augmented Lagrangian has sufficient descent up to
the summable perturbation terms $\varepsilon_\ell$ and $\Delta_Q^\ell$ if
the following condition is satisfied:
\begin{equation}
\label{eq:fedkad_rho_condition}
    \rho\mu_i(\rho)\geq 2L_i^2 \nonumber
\end{equation}

\begin{remark}
Lemma~\ref{lemma:fedkad_descent} shows that the augmented Lagrangian has a
sufficient descent property up to two summable perturbation terms:
$\varepsilon_\ell$, caused by inexact local optimization, and
$\Delta_Q^\ell$, caused by refitting the reduced Koopman operators.
\end{remark}

\subsection{Convergence of the Augmented Lagrangian}

Here, we combine Lemma~\ref{lemma:fedkad_dual_diff} and
Lemma~\ref{lemma:fedkad_descent} to establish the convergence of the
augmented Lagrangian.

\begin{theorem}
\label{thm:fedkad_lagrangian_convergence}
Suppose Assumptions A1--A5 hold. Further suppose that after $L$
communication rounds, every client has participated at least once. Then,
by choosing $\rho$ sufficiently large, the augmented Lagrangian sequence
$\{\mathcal{L}^{\ell}\}$ converges to a finite value. Moreover,
\[
    \|Z^{\ell+1}-Z^\ell\|_F\to0,
    \qquad
    \|P_i^{\ell+1}-P_i^\ell\|_F\to0,
    \quad \forall i,
\]
and
\[
    \|P_i^{\ell+1}-Z^{\ell+1}\|_F\to0,
    \quad \forall i .
\]
\end{theorem}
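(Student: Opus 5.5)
We plan to prove the theorem by the standard telescoping argument built on Lemma~\ref{lemma:fedkad_descent}, in three steps.

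\textbf{Step 1: choice of $\rho$ and telescoping.} First fix $\rho$ large enough that $\rho\mu_i(\rho)\geq 2L_i^2$ holds strictly for every $i$. Then the compact descent inequality \eqref{eq:fedkad_descent_compact} holds with constants
\[
a=\min_i\Big(\tfrac{\mu_i(\rho)}{2}-\tfrac{L_i^2}{\rho}\Big)>0,
\qquad
b=\tfrac{\rho}{2}\min_\ell|\mathcal{S}^\ell|>0,
\]
using $|\mathcal{S}^\ell|\geq1$. Summing \eqref{eq:fedkad_descent_compact} from $\ell=0$ to $K-1$ gives
\[
\mathcal{L}^{K}+a\sum_{\ell<K}\sum_{i\in\mathcal{S}^\ell}\|P_i^{\ell+1}-P_i^\ell\|_F^2+b\sum_{\ell<K}\|Z^{\ell+1}-Z^\ell\|_F^2
\leq \mathcal{L}^0+\sum_{\ell}(\varepsilon_\ell+\Delta_Q^\ell).
\]
The right-hand side is finite by A4 and A5. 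The left-hand side is bounded below by A3. Hence both series $\sum\|P_i^{\ell+1}-P_i^\ell\|_F^2$ and $\sum\|Z^{\ell+1}-Z^\ell\|_F^2$ converge, so the corresponding increments tend to zero. For a non-participating client we have $P_i^{\ell+1}=P_i^\ell$, so its increment vanishes trivially. This is why we only need every client to participate at least once: each $\Lambda_i$ and $P_i$ must become tied to the gradient relation of Lemma~\ref{lemma:fedkad_dual_diff} after its first participation, and from then on the per-client increments enter the summed bound.

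\textbf{Step 2: convergence of $\mathcal{L}^\ell$.} Define the shifted sequence
\[
\tilde{\mathcal{L}}^\ell:=\mathcal{L}^\ell-\sum_{k\geq\ell}(\varepsilon_k+\Delta_Q^k).
\]
By the descent bound it is nonincreasing, and by A3 it is bounded below, so it converges. The tail sums vanish, so $\mathcal{L}^\ell$ converges to the same finite limit.

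\textbf{Step 3: consensus residual.} For a participating client, the dual update gives
\[
P_i^{\ell+1}-Z^{\ell+1}=\tfrac{1}{\rho}\big(\Lambda_i^{\ell+1}-\Lambda_i^\ell\big).
\]
Combining this with Lemma~\ref{lemma:fedkad_dual_diff},
\[
\|P_i^{\ell+1}-Z^{\ell+1}\|_F\leq \tfrac{L_i}{\rho}\|P_i^{\ell+1}-P_i^\ell\|_F\to0.
\]

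\textbf{Main obstacle.} The hard part is clients that do not participate in round $\ell$. Their $P_i$ and $\Lambda_i$ are frozen while $Z$ moves. We plan to bound
\[
\|P_i^{\ell+1}-Z^{\ell+1}\|_F\leq\|P_i^{\ell_i}-Z^{\ell_i}\|_F+\sum_{k=\ell_i}^{\ell}\|Z^{k+1}-Z^{k}\|_F,
\]
where $\ell_i$ is client $i$'s last participation round. This requires a bounded-gap participation condition, namely that every client participates at least once in each window of $L$ rounds; we read the theorem's hypothesis in this sense. Under that reading the sum has at most $L$ terms, each tending to zero by Step 1, and the residual vanishes for all $i$. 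If the hypothesis only means a single participation, this step would need the additional argument that frozen duals are eventually consistent; we would flag this gap.
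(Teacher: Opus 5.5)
Your proposal follows the paper's proof. You telescope the compact descent inequality of Lemma~\ref{lemma:fedkad_descent}, use A3--A5 to get square-summable increments, obtain the consensus residual from the dual update together with Lemma~\ref{lemma:fedkad_dual_diff}, and conclude that $\mathcal{L}^\ell$ converges because it descends up to summable perturbations. Your insistence on the strict inequality $\rho\mu_i(\rho)>2L_i^2$ is also right: the paper's non-strict version only gives $a\geq 0$, not $a>0$.

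\textbf{Sign error in Step~2.} With $\tilde{\mathcal{L}}^\ell=\mathcal{L}^\ell-\sum_{k\geq\ell}(\varepsilon_k+\Delta_Q^k)$ you get $\tilde{\mathcal{L}}^{\ell+1}-\tilde{\mathcal{L}}^\ell=\mathcal{L}^{\ell+1}-\mathcal{L}^\ell+\varepsilon_\ell+\Delta_Q^\ell$. The descent bound only makes this at most $2(\varepsilon_\ell+\Delta_Q^\ell)$, so it is not nonincreasing. The tail must be added instead: $\tilde{\mathcal{L}}^\ell=\mathcal{L}^\ell+\sum_{k\geq\ell}(\varepsilon_k+\Delta_Q^k)$. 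After that change, monotonicity, the lower bound, and the vanishing-tail argument go through exactly as you describe.

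\textbf{Partial participation.} Your treatment here is more careful than the paper's. The paper reads the hypothesis as a bounded window of $L$ rounds, as you do. However, it invokes that hypothesis only for the $P_i$-increments, where, as you note, idle clients contribute zero anyway. It then applies $P_i^{\ell+1}-Z^{\ell+1}=\rho^{-1}(\Lambda_i^{\ell+1}-\Lambda_i^\ell)$ to every client. That identity is false in rounds where client $i$ is idle: its dual is frozen, so the right side vanishes, while $P_i^\ell-Z^{\ell+1}$ generally does not. Your triangle-inequality bound, using the residual at the last participation plus at most $L$ increments of $Z$, is what actually closes that case. You are also right to flag that a single participation would not suffice for this argument: a client frozen forever gives no reason for $Z^\ell$ to approach its fixed $P_i$.
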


\textit{Proof.}
Summing \eqref{eq:fedkad_descent_compact} from $\ell=0$ to $L-1$ yields
\begin{equation}
\label{eq:fedkad_telescoping}
\begin{split}
\mathcal{L}^{L}-\mathcal{L}^{0}
\leq
&-
a
\sum_{\ell=0}^{L-1}
\sum_{i\in\mathcal{S}^{\ell}}
\|P_i^{\ell+1}-P_i^\ell\|_F^2  \\
&-
b
\sum_{\ell=0}^{L-1}
\|Z^{\ell+1}-Z^\ell\|_F^2
+
\sum_{\ell=0}^{L-1}\varepsilon_\ell
+
\sum_{\ell=0}^{L-1}\Delta_Q^\ell .
\end{split}
\end{equation}
By Assumption A3, $\mathcal{L}^{L}$ is bounded from below. By
Assumptions A4 and A5,
\[
    \sum_{\ell=0}^{\infty}\varepsilon_\ell<\infty,
    \qquad
    \sum_{\ell=0}^{\infty}\Delta_Q^\ell<\infty .
\]
Letting $L\to\infty$ in \eqref{eq:fedkad_telescoping}, we obtain
\[
    \sum_{\ell=0}^{\infty}
    \sum_{i\in\mathcal{S}^{\ell}}
    \|P_i^{\ell+1}-P_i^\ell\|_F^2
    <
    \infty,
    \qquad
    \sum_{\ell=0}^{\infty}
    \|Z^{\ell+1}-Z^\ell\|_F^2
    <
    \infty .
\]
Therefore,
\[
    \|Z^{\ell+1}-Z^\ell\|_F\to0,
    \qquad
    \|P_i^{\ell+1}-P_i^\ell\|_F\to0
\]
along the rounds in which client $i$ participates. Since every client
participates at least once within any $L$ consecutive communication
rounds, the latter limit holds for all clients.

From the dual update,
\[
    P_i^{\ell+1}-Z^{\ell+1}
    =
    \rho^{-1}
    \left(
        \Lambda_i^{\ell+1}-\Lambda_i^\ell
    \right).
\]
Combining this with Lemma~\ref{lemma:fedkad_dual_diff}, we have
\[
    \|\Lambda_i^{\ell+1}-\Lambda_i^\ell\|_F\to0 .
\]
Therefore,
\[
    \|P_i^{\ell+1}-Z^{\ell+1}\|_F\to0 .
\]
Finally, \eqref{eq:fedkad_descent_compact} shows that all possible
increases of $\mathcal{L}^{\ell}$ are controlled by the summable
sequences $\{\varepsilon_\ell\}$ and $\{\Delta_Q^\ell\}$. Since
$\mathcal{L}^{\ell}$ is bounded from below, the sequence
$\{\mathcal{L}^{\ell}\}$ converges to a finite value.
\qed

\begin{remark}
Theorem~\ref{thm:fedkad_lagrangian_convergence} establishes that the
augmented Lagrangian converges and that the consensus constraint is
asymptotically satisfied, i.e.,
$\|P_i^{\ell+1}-Z^{\ell+1}\|_F\to0$ for all clients.
\end{remark}

\subsection{Convergence to a Stationary Point}

In the following theorem, we show that every limit point of the sequence
generated by \textsc{FedKAD} satisfies the first-order stationarity
conditions.

\begin{theorem}
\label{thm:fedkad_stationarity}
Suppose the conditions in Theorem~\ref{thm:fedkad_lagrangian_convergence}
hold and the local $P_i$-subproblems are solved to first-order
stationarity. Then every limit point
$(\{P_i^\star\},Z^\star,\{\Lambda_i^\star\})$ of the sequence generated by
\textsc{FedKAD} is a stationary solution. That is, for all
$i=1,\ldots,N$,
\begin{align}
    P_i^\star &= Z^\star,
    \label{eq:fedkad_stationary_consensus}
    \\
    \nabla_{\mathrm{St}}
    \left[
        f_i(P_i^\star,Q_i^\star)
        +
        \langle \Lambda_i^\star,
        P_i^\star-Z^\star\rangle_F
    \right]
    &=0 .
    \label{eq:fedkad_stationary_gradient}
\end{align}
Moreover,
\begin{equation}
\label{eq:fedkad_Q_stationary}
    Q_i^\star
    \in
    \arg\min_{Q_i}
    f_i(P_i^\star,Q_i).\nonumber
\end{equation}
If $P_i^{\star\top}X_iX_i^\top P_i^\star$ is nonsingular, then
\begin{equation}
\label{eq:fedkad_Q_stationary_closed_form}
    Q_i^\star
    =
    P_i^{\star\top}Y_iX_i^\top P_i^\star
    \left(
        P_i^{\star\top}X_iX_i^\top P_i^\star
    \right)^{-1}.\nonumber
\end{equation}
\end{theorem}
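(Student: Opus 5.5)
The argument passes to a convergent subsequence and then takes limits in the per-round optimality conditions, using the vanishing differences from Theorem~\ref{thm:fedkad_lagrangian_convergence}.

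\emph{Extracting the subsequence.} Let $(\{P_i^\star\},Z^\star,\{\Lambda_i^\star\})$ be a limit point along a subsequence $\{\ell_k\}$. The iterates $P_i^\ell$ lie on the compact set $\mathrm{St}(D,r)$, and $Z^\ell$ is an average of bounded terms. Along the subsequence, Theorem~\ref{thm:fedkad_lagrangian_convergence} gives $\|P_i^{\ell_k+1}-P_i^{\ell_k}\|_F\to0$ and $\|Z^{\ell_k+1}-Z^{\ell_k}\|_F\to0$. Hence the shifted iterates $P_i^{\ell_k+1}$ and $Z^{\ell_k+1}$ have the same limits. For clients that do not participate in round $\ell_k$, I pass to the most recent participating round; the participation-within-$L$-rounds hypothesis, together with the vanishing differences, shows that these indices converge to the same limit.

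\emph{Consensus and stationarity.} Taking limits in $\|P_i^{\ell+1}-Z^{\ell+1}\|_F\to0$ gives \eqref{eq:fedkad_stationary_consensus}. For \eqref{eq:fedkad_stationary_gradient}, I use the local first-order condition assumed in the statement, namely that the Riemannian gradient of $\mathcal F_i^\ell$ vanishes at $P_i^{\ell+1}$:
\[
\nabla_{\mathrm{St}}\big[f_i(P_i^{\ell+1},\hat Q_i^{\ell+1})+\langle\Lambda_i^\ell,P_i^{\ell+1}\rangle_F+\tfrac{\rho}{2}\|P_i^{\ell+1}-Z^\ell\|_F^2\big]=0.
\]
Because $P_i^{\ell+1}-Z^\ell=(P_i^{\ell+1}-Z^{\ell+1})+(Z^{\ell+1}-Z^\ell)\to0$, the penalty contribution vanishes in the limit. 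The map $P\mapsto(I-PP^\top)G$ is continuous on the Stiefel manifold, and $f_i$ is smooth in $(P,Q)$. Therefore it suffices to show that $\hat Q_i^{\ell_k+1}\to Q_i^\star$ and $\Lambda_i^{\ell_k}\to\Lambda_i^\star$; the latter holds since $\|\Lambda_i^{\ell+1}-\Lambda_i^\ell\|_F\to0$. Passing to the limit then yields $\nabla_{\mathrm{St}}[f_i(P_i^\star,Q_i^\star)+\langle\Lambda_i^\star,P_i^\star-Z^\star\rangle_F]=0$. The $Z^\star$ term has zero Riemannian gradient with respect to $P_i$.

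\emph{The $Q$-part.} Write $\hat Q_i^{\ell+1}=\hat Q_i(P_i^\ell)$, which minimizes $f_i(P_i^\ell,\cdot)$. Passing to a further subsequence so that $\hat Q_i^{\ell_k+1}\to Q_i^\star$, continuity of $f_i$ gives $f_i(P_i^\star,Q_i^\star)\le f_i(P_i^\star,Q)$ for every fixed $Q$. This is the claim $Q_i^\star\in\arg\min_Q f_i(P_i^\star,Q)$. If $G_i(P_i^\star)$ is nonsingular, then $G_i(P)^{-1}$ is continuous near $P_i^\star$, so \eqref{eq:fedkad_Q_hat} passes to the limit directly. This gives the closed form and also shows that the whole sequence $\hat Q_i^{\ell+1}$ converges without any subsequence extraction.

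\emph{Main obstacle.} The hardest step is boundedness of $\hat Q_i^\ell$ and $\Lambda_i^\ell$ when $G_i$ may be singular. I would handle $\Lambda_i^\ell$ through the relation $\Lambda_i^{\ell}=-\nabla_{\mathrm{St}}f_i(P_i^{\ell})$ from Lemma~\ref{lemma:fedkad_dual_diff}, which is bounded on the compact manifold once $\hat Q_i$ is bounded. For $\hat Q_i$ I would use the Tikhonov-regularized formula, which is a continuous function of $P$ on a compact set and is therefore bounded. In the nonsingular case the same conclusion follows from continuity of the inverse near $P_i^\star$.
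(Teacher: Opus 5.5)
Your route matches the paper's. You take a convergent subsequence and read off $P_i^\star=Z^\star$ from the vanishing consensus residual. You then pass to the limit in the per-round first-order condition, using that the penalty gradient disappears because $P_i^{\ell+1}-Z^\ell\to0$. Finally, you get the $Q$-claims from continuity and the normal equations. In places you are more careful than the paper: you correctly write $\hat Q_i^{\ell+1}=\hat Q_i(P_i^\ell)$ rather than a minimizer at $P_i^{\ell+1}$, and you handle non-participating clients explicitly. You do not need a separate boundedness argument for the duals, since $\Lambda_i^{\ell_k}\to\Lambda_i^\star$ is part of the given limit point.

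The step that does not close is the $Q$-part when $G_i(P_i^\star)$ is singular. There you combine two incompatible definitions of $\hat Q_i$:
\begin{itemize}
\item Your argmin conclusion needs $\hat Q_i^{\ell+1}$ to minimize $f_i(P_i^\ell,\cdot)$ exactly.
\item Your boundedness fix uses the Tikhonov formula. For $P\in\mathrm{St}(D,r)$ one has $f_i(P,Q)=\|(I-PP^\top)Y_i\|_F^2+\|P^\top Y_i-QP^\top X_i\|_F^2$. So $P^\top Y_iX_i^\top P\,(G_i(P)+\lambda I)^{-1}$ minimizes $f_i(P,\cdot)+\lambda\|\cdot\|_F^2$. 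Its limit therefore minimizes $f_i(P_i^\star,\cdot)+\lambda\|\cdot\|_F^2$ and satisfies the closed form with $G_i+\lambda I$, not the stated ones.
\item Conversely, with exact minimizers, write $A=P^\top X_i$ and $B=P^\top Y_i$. Then $\hat Q_i(P)=BA^{+}$ with $\|A^{+}\|_2=1/\sigma_{\min}(A)\to\infty$ as $P\to P_i^\star$. Hence $\hat Q_i^{\ell_k+1}$ need not stay bounded, and $Q_i^\star$ need not exist.
\end{itemize}
Neither choice gives you both existence of $Q_i^\star$ and $Q_i^\star\in\arg\min_{Q}f_i(P_i^\star,Q)$. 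The paper's ``by continuity'' step silently assumes that the exact minimizers converge along the subsequence, so it has the same hole. The honest repair is to make that an explicit hypothesis, or to restrict the $Q$-claims to the case where $G_i(P_i^\star)$ is nonsingular and the unregularized formula is used near $P_i^\star$. In that case your continuity argument is complete.

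One small correction there as well: you only get convergence of $\hat Q_i^{\ell_k+1}$ along the subsequence, not of the whole sequence. Vanishing successive differences $\|P_i^{\ell+1}-P_i^\ell\|_F\to0$ do not make $P_i^\ell$ itself convergent.
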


\textit{Proof.}
Let $(\{P_i^\star\},Z^\star,\{\Lambda_i^\star\})$ be any limit point and
consider a convergent subsequence. From
Theorem~\ref{thm:fedkad_lagrangian_convergence},
\[
    \|P_i^{\ell+1}-Z^{\ell+1}\|_F\to0 .
\]
Taking the limit along the convergent subsequence gives
$P_i^\star=Z^\star$, which proves
\eqref{eq:fedkad_stationary_consensus}.

Next, since the local $P_i$-subproblem is solved to first-order
stationarity on the Stiefel manifold, we have
\[
\nabla_{\mathrm{St}}
\left[
f_i(P_i^{\ell+1},Q_i^{\ell+1})
+
\langle \Lambda_i^\ell,
P_i^{\ell+1}-Z^\ell\rangle_F
+
\frac{\rho}{2}
\|P_i^{\ell+1}-Z^\ell\|_F^2
\right]
\to0 .
\]
Moreover,
\[
P_i^{\ell+1}-Z^\ell
=
(P_i^{\ell+1}-Z^{\ell+1})
+
(Z^{\ell+1}-Z^\ell)
\to0 .
\]
Thus, the gradient contribution of the quadratic consensus penalty
vanishes in the limit. Passing to the limit gives
\[
\nabla_{\mathrm{St}}
\left[
f_i(P_i^\star,Q_i^\star)
+
\langle \Lambda_i^\star,
P_i^\star-Z^\star\rangle_F
\right]
=0,
\]
which proves \eqref{eq:fedkad_stationary_gradient}.

Finally, for each fixed $P_i^{\ell+1}$, the reduced Koopman operator
$Q_i^{\ell+1}$ is computed as a minimizer of the local least-squares
objective. By continuity,
\[
    Q_i^\star
    \in
    \arg\min_{Q_i}
    f_i(P_i^\star,Q_i).
\]
If $P_i^{\star\top}X_iX_i^\top P_i^\star$ is nonsingular, the minimizer is
unique. Solving the normal equations gives
\[
    Q_i^\star
    =
    P_i^{\star\top}Y_iX_i^\top P_i^\star
    \left(
        P_i^{\star\top}X_iX_i^\top P_i^\star
    \right)^{-1}.
\]
Therefore, every limit point satisfies the stated stationarity
conditions.
\qed

%% file: sections/jiot_experiment.tex
\section{Experimental Results}
\label{sec:experiments}


We evaluate \textsc{FedKAD} on four widely used MVTS anomaly-detection benchmarks summarised in
\Cref{tab:dataset-stats}: (1)~\textbf{Pool Server Metrics
(PSM)}~\cite{abdulaal2021practical}, a $25$-dimensional dataset from eBay's
pooled server resources with $132{,}481$ training and $87{,}841$
testing samples and a $27.75\%$ anomaly ratio; (2)~\textbf{Server
Machine Dataset (SMD)}~\cite{su2019robust}, five weeks of
resource-utilisation traces from $28$ servers, each with $38$ metrics
(CPU, memory, network) and a $4.16\%$ anomaly ratio; (3)~\textbf{Soil
Moisture Active Passive (SMAP)} and (4)~\textbf{Mars Science
Laboratory (MSL)}~\cite{hundman2018detecting}, NASA spacecraft
telemetry with $54$ and $27$ entities and $25$ and $55$ channels
respectively (anomaly ratios $12.85\%$ and $10.53\%$).
\Cref{tab:dataset-stats} lists the per-dataset train/test counts,
anomaly ratio, channel count, and the number of FL clients induced by
the partitioning scheme described in \Cref{sec:fl-setup}.

\begin{table}[!hbp]
\centering
\caption{Dataset statistics. \emph{NS}: number of channels per
entity; \emph{NN}: number of FL clients (one per entity for
SMD/SMAP/MSL; Dirichlet-partitioned for PSM).}
\label{tab:dataset-stats}
\footnotesize
\setlength{\tabcolsep}{3pt}
\begin{tabular}{@{}lrrrrr@{}}
\toprule
Dataset & Train & Test & Anom.\,(\%) & NS & NN \\
\midrule
SMD  & 708\,405 & 708\,420 &  4.16 & 38 & 28 \\
PSM  & 132\,481 &  87\,841 & 27.75 & 25 & 24 \\
SMAP & 135\,183 & 427\,617 & 12.85 & 25 & 54 \\
MSL  &  58\,317 &  73\,729 & 10.53 & 55 & 27 \\
\bottomrule
\end{tabular}
\end{table}
\begin{figure*}[!tbp]
\centering
\includegraphics[width=\linewidth,height=0.32\textheight,keepaspectratio]{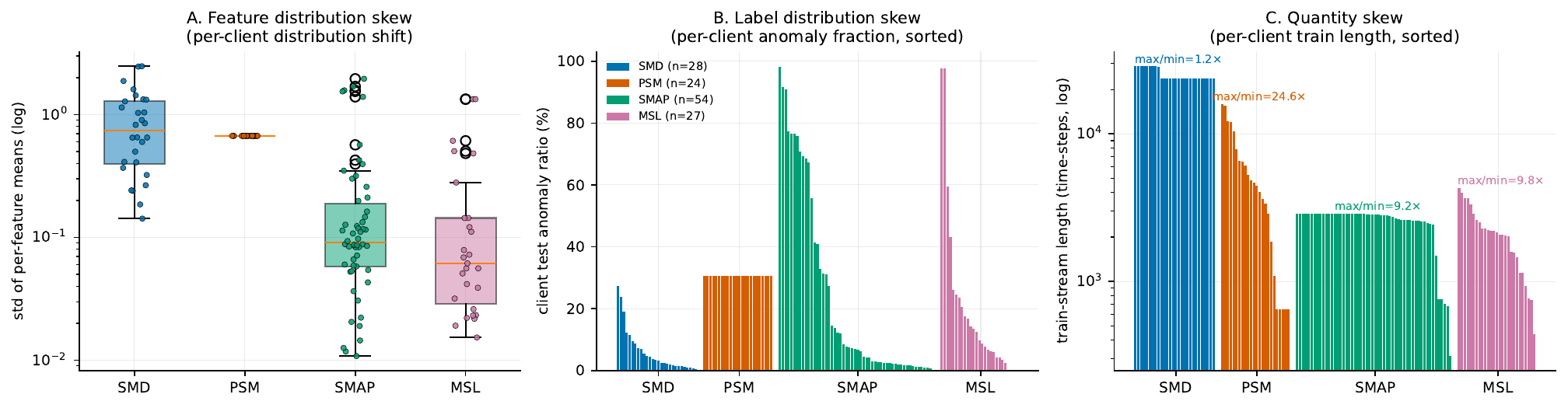}
\caption{Non-IID heterogeneity across the four benchmarks, covering feature distribution skew, label distribution skew, and quantity skew.}
\label{fig:heterogeneity}
\end{figure*}
\subsection{Federated learning settings}
\label{sec:fl-setup}

\subsubsection{Dataset}
For SMD, SMAP and MSL we use the \emph{natural per-entity partition}:
one physical entity (server or spacecraft subsystem) becomes one FL
client, so its train and test streams remain entity-specific and
disjoint and no cross-entity leakage is possible. This yields
$28$, $54$ and $27$ clients respectively. PSM provides a single
shared stream, so we induce non-IID-ness by Dirichlet partitioning
following \cite{mcmahan2017communication}: the
$132{,}481$-sample training stream is sliced into $24$ contiguous
chunks of mean length $5{,}520$ which are then assigned to clients
via Dirichlet$(\alpha{=}0.5)$ over the $25$ feature dimensions, so
each client predominantly observes a different subset of channels.

Three independent sources of non-IID heterogeneity co-occur in our
federations (\Cref{fig:heterogeneity}): (i)~\emph{feature-distribution
skew}\,---\,per-client std of the per-feature mean spans $0.14$--$2.47$
on SMD, $0.011$--$1.96$ on SMAP and $0.015$--$1.33$ on
MSL\,---\,roughly two orders of magnitude across the federation;
(ii)~\emph{label-distribution skew}\,---\,the per-client anomaly
ratio spans $0.4$--$27\%$ on SMD and $0$--$98\%$ on SMAP/MSL, while
PSM is essentially uniform at ${\approx}30\%$ by construction; and
(iii)~\emph{quantity skew}\,---\,per-client training length varies
$1.2{\times}$ on SMD, ${\approx}10{\times}$ on SMAP/MSL and up to
${\approx}25{\times}$ on PSM.

All data are $z$-scored per client (SMD/SMAP/MSL: each client is its
own physical entity, zero leakage) or globally on PSM (a single
shared test stream forces a single scaler), then clipped to
$\pm 10\sigma$ to remove training-stream spikes that destabilise
reconstruction baselines. We use a sliding window of length $w{=}20$
with stride $1$, matching USAD/TranAD. The shared FedAvg
envelope (\Cref{tab:fl-config}) trains all five methods for $30$
global rounds with $25\%$ client sampling per round, $5$ local
epochs, \texttt{Adam} at $\text{lr}{=}1{\times}10^{-3}$, batch size
$128$, and a temporal $85\%/15\%$ per-client train/validation split.
For \textsc{FedKAD}, we use the same default hyperparameter setting on SMD, SMAP, and MSL: Koopman lifting dimension $d_{\mathrm{lift}}=128$, subspace rank $r=24$, ADMM penalty $\rho=1.0$, Stiefel-ADMM step size $0.05$, ridge regularisation $\lambda=10^{-2}$, spectral cap $0.995$, score-smoothing window of $5$ steps, and prediction horizon $H=1$. For PSM, all hyperparameters are kept unchanged except for the subspace rank, which is increased from $r=24$ to $r=32$ based on a held-out validation sweep without using test labels.
Every \texttt{(dataset, method)} configuration is repeated for three seeds
$\{0,1,2\}$; we report mean$\pm$std across seeds.
Implementation: Python~3.10, PyTorch~2.1.0, CUDA~12.1; experiments
run on $4{\times}$ NVIDIA A40 ($45$\,GB) GPUs.

\subsubsection{Baselines}
We compare \textsc{FedKAD} against four representative neural
baselines for MTAD: (1)~\textbf{DeepSVDD}~\cite{ruff2018deep}, a
one-class deep network with hypersphere objective;
(2)~\textbf{LSTM-AE}~\cite{malhotra2016lstm}, an autoencoder of
stacked LSTM units; (3)~\textbf{USAD}~\cite{audibert2020usad}, an
adversarial encoder--decoder for unsupervised MTAD; and
(4)~\textbf{TranAD}~\cite{tuli2022tranad}, a transformer-based
reconstructor with an adversarial decoder. Because these methods are
originally designed for centralised training, we wrap them in
\textbf{FedAvg}~\cite{mcmahan2017communication} with the identical
federated envelope used for \textsc{FedKAD}, so the only dimension of
variation is the local model class. LSTM-AE, USAD and TranAD use the
default architectures of the original papers; for DeepSVDD we disable
the autoencoder pretraining and centralised hypersphere-centre
initialisation under our FL-faithful evaluation because they would
require server access to the union of every client's raw training
data, violating the FL privacy constraint.



\begin{table}[!tbp]
\centering
\caption{Federated training configuration shared by all methods.}
\label{tab:fl-config}
\small
\resizebox{8.9cm}{!} {
\begin{tabular}{lclc}
\toprule
Component & Value & Component & Value \\
\midrule
Aggregation~\cite{mcmahan2017communication} & FedAvg
& Global rounds & $30$ \\
Local epochs / round & $5$
& Client sampling / round & $25\%$ \\
Sampling policy & uniform
& Local batch size & $128$ \\
Optimiser, lr & \texttt{Adam}, $10^{-3}$
& Train / val split & temporal $85\%/15\%$ \\
Window / stride & $20/1$
& Seeds & ${0,1,2}$ \\
\bottomrule
\end{tabular}
}
\end{table}


\subsubsection{Evaluation metrics}
We report Precision, Recall, F$_1$, and AUC under a per-client macro
aggregation protocol. Specifically, each metric is first computed separately
on each client's test stream using its local ground-truth labels, and the
resulting scores are then averaged uniformly across clients. This protocol
prevents clients with longer streams from dominating the aggregate result and
therefore reflects performance at the federation level rather than at the
pooled-sample level.

To facilitate comparison with prior work while avoiding over-reliance on the
known optimistic bias of point adjustment~\cite{kim2022rigorous}, we evaluate
four complementary thresholding and adjustment protocols. First, our primary
headline metric uses segment-aware PA\%K with $k{=}0.01$~\cite{kim2022rigorous},
where the detection threshold is tuned on each client's validation split.
Under PA\%K, an anomalous segment is counted as detected only if at least a
fraction $k$ of its points are predicted as anomalous; thus, $k{=}0.01$
corresponds to requiring at least $1\%$ of the segment to be detected. This
criterion mitigates the classic any-hit pathology of point adjustment, where
a single isolated alarm inside an anomalous segment is sufficient to mark the
entire segment as correctly detected. Second, we report the conventional
blind-threshold setting based on POT~\cite{siffer2017anomaly} with any-hit
point adjustment ($k{=}0$)~\cite{xu2018unsupervised}; here, the threshold is
selected without access to test labels and an anomalous segment is considered
detected if at least one point within the segment is flagged. Third, we
evaluate a stricter blind setting that combines POT thresholding with PA\%K
at $k{=}0.1$, corresponding to requiring at least $10\%$ of an anomalous
segment to be detected, together with a recall cap of $0.995$ to reject
lucky-spike thresholds. Finally, we report strict point-wise F$_1$ at the POT
threshold without point adjustment, which measures detection quality at the
individual timestamp level.
\subsection{Main results}
\label{sec:results-main}

\subsubsection{Detection performance and statistical significance}

\Cref{fig:f1-matrix-pa001} summarizes the per-client averaged F$_1$
scores of \textsc{FedKAD} and the four baselines under the primary
PA
on three of the four datasets, namely SMD, PSM, and SMAP, while showing
only a small degradation on MSL. On SMD, \textsc{FedKAD} obtains
$72.20{\pm}1.28$, outperforming the strongest baseline, LSTM-AE, by
$+7.69$ F$_1$ ($p{=}0.038$, $d_z{=}+2.86$). On PSM, \textsc{FedKAD}
achieves $69.87{\pm}0.21$, improving over LSTM-AE by $+1.29$ F$_1$
($p{=}0.0092$, $d_z{=}+5.97$). On SMAP, \textsc{FedKAD} reaches
$53.32{\pm}0.50$, giving a positive margin of $+1.76$ F$_1$ over
LSTM-AE. Although this gain is not statistically significant at the
$p{<}0.10$ level, it exceeds the predefined verdict margin of
$0.5$ F$_1$. On MSL, \textsc{FedKAD} obtains $39.55{\pm}0.02$, compared
with $40.50{\pm}4.63$ for LSTM-AE, corresponding to a small loss of
$0.95$ F$_1$. 


\begin{figure}[!tbp]
\centering
\includegraphics[width=0.7\columnwidth]{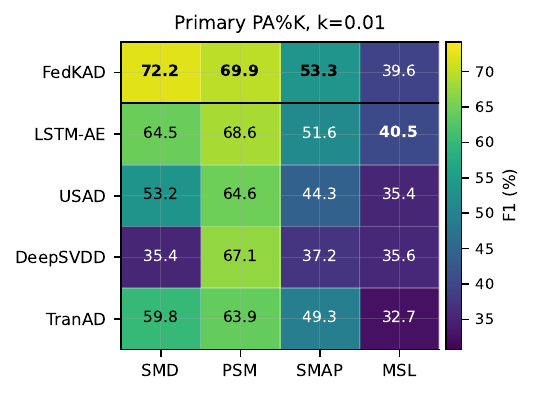}
\caption{Performance under the primary PA\%K protocol with $k=0.01$.}
\label{fig:f1-matrix-pa001}
\end{figure}

\begin{table}[!tbp]
\centering
\caption{Paired-$t$ test of \textsc{FedKAD} versus the strongest
baseline per dataset, with Cohen $d_z$ effect size and F$_1$ margin.
Verdict gate: $\text{margin}{>}0.5$~F$_1$.}
\label{tab:main-stats}
\small
 \resizebox{8.9cm}{!} {
\begin{tabular}{lccccc l}
\toprule
Dataset & \textsc{FedKAD} F$_1$ & Best baseline & Margin & $p$ & $d_z$ & Outcome \\
\midrule
SMD  & 72.20$\pm$1.28 & LSTM-AE ($64.51$) & $+7.69$ & $0.0384$ & $+2.86$ & \textbf{WIN} \\
PSM  & 69.87$\pm$0.21 & LSTM-AE ($68.57$) & $+1.29$ & $0.0092$ & $+5.97$ & \textbf{WIN} \\
SMAP & 53.32$\pm$0.50 & LSTM-AE ($51.56$) & $+1.76$ & $0.25$   & $+0.93$ & \textbf{WIN} \\
MSL  & 39.55$\pm$0.02 & LSTM-AE ($40.50$) & $-0.95$ & $0.80$   & $-0.17$ & loss \\
\bottomrule
\end{tabular}
}
\end{table}

\Cref{tab:main-stats} reports the paired-$t$ test between
\textsc{FedKAD} and the strongest baseline on each dataset. The results
confirm that the F$_1$ improvements on SMD and PSM are statistically
supported, with margins of $+7.69$ and $+1.29$ F$_1$, respectively, and
large effect sizes ($d_z{=}+2.86$ and $d_z{=}+5.97$). On SMAP,
\textsc{FedKAD} still improves over the strongest baseline by
$+1.76$ F$_1$, although the paired test is not significant because of
seed-level variability. On MSL, \textsc{FedKAD} is below LSTM-AE by only
$0.95$ F$_1$, with a negligible effect size ($d_z{=}-0.17$). It is easily to see that
\textsc{FedKAD} satisfies the predefined verdict gate on three of the
four datasets, while its only loss remains small.

\subsubsection{Robustness across evaluation protocols}

\Cref{tab:protocol-robustness}
evaluate FedKAD under three supplementary protocols. Under the classic
POT threshold with any-hit point adjustment, FedKAD remains highly
competitive and achieves the best F$_1$ on SMD and PSM, while tying
LSTM-AE on MSL. However, this protocol saturates recall on long anomaly
segments, so the ranking mainly reflects precision. When the PA\%K requirement is tightened to $k=0.10$, FedKAD keeps a clear advantage on SMD but becomes less competitive on PSM, SMAP, and
MSL. This indicates that FedKAD detects anomalous segments reliably on
server-machine streams but is more sensitive to stricter segment-coverage
requirements on spacecraft and PSM streams. Finally, under the strict point-wise no-PA protocol, all methods obtain substantially lower F$_1$ scores, confirming the difficulty of exact
point-level localization. FedKAD remains competitive on MSL but does not
dominate this strict setting. Overall, the multi-protocol results show
that FedKAD's main advantage is strongest under segment-level anomaly
detection protocols.

\begin{figure*}[!tbp]
\centering
\includegraphics[width=0.85\linewidth,height=0.30\textheight,keepaspectratio]{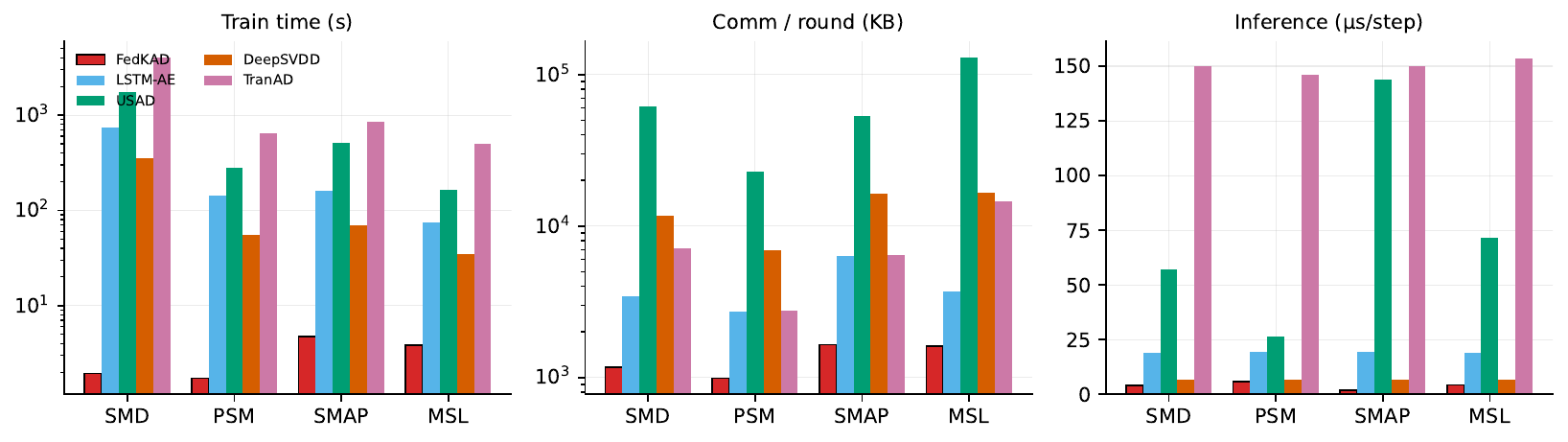}
\caption{Federated efficiency: training wall-clock (s), per-round
communication (KB) and inference latency ($\mu$s/step), log scale.
\textsc{FedKAD} is two to three orders of magnitude faster to train
and one to two orders cheaper in communication than the next-best
neural baseline, with comparable or lower inference latency.}
\label{fig:efficiency}
\end{figure*}

\begin{table}[t]
\centering
\caption{FedKAD robustness under supplementary evaluation protocols. 
The best baseline is selected per dataset and protocol.}
\label{tab:protocol-robustness}
\resizebox{\columnwidth}{!}{
\begin{tabular}{llccc}
\toprule
Protocol & Dataset & FedKAD F$_1$ & Best baseline F$_1$ & Margin \\
\midrule
\multirow{4}{*}{POT + any-hit PA}
& SMD  & \textbf{89.19$\pm$0.78} & LSTM-AE 83.94$\pm$0.67 & +5.25 \\
& PSM  & \textbf{98.88$\pm$0.19} & LSTM-AE 98.37$\pm$0.15 & +0.51 \\
& SMAP & 94.76$\pm$0.02 & USAD \textbf{96.17$\pm$0.96} & -1.41 \\
& MSL  & \textbf{83.59$\pm$0.08} & LSTM-AE 83.58$\pm$0.15 & +0.02 \\
\midrule
\multirow{4}{*}{PA\%K, $k=0.10$}
& SMD  & \textbf{55.30$\pm$2.39} & LSTM-AE 44.68$\pm$0.59 & +10.62 \\
& PSM  & 60.58$\pm$0.42 & LSTM-AE \textbf{67.15$\pm$0.11} & -6.57 \\
& SMAP & 37.28$\pm$1.33 & TranAD \textbf{44.78$\pm$0.18} & -7.50 \\
& MSL  & 41.76$\pm$0.01 & LSTM-AE \textbf{42.94$\pm$1.54} & -1.17 \\
\midrule
\multirow{4}{*}{Point-wise no-PA}
& SMD  & 14.61$\pm$0.87 & USAD \textbf{17.30$\pm$0.64} & -2.69 \\
& PSM  & 24.57$\pm$0.06 & DeepSVDD \textbf{28.65$\pm$8.68} & -4.08 \\
& SMAP & 3.87$\pm$0.01 & DeepSVDD \textbf{15.60$\pm$2.36} & -11.73 \\
& MSL  & \textbf{7.50$\pm$0.02} & DeepSVDD 6.91$\pm$0.72 & +0.60 \\
\bottomrule
\end{tabular}
}
\end{table}

\subsubsection{Federated efficiency}
\Cref{fig:efficiency}
reports end-to-end
training wall-clock, per-round communication and inference latency.
On all four datasets, \textsc{FedKAD} trains in $1.7$--$4.8$\,s
versus $75$--$4020$\,s for the neural baselines (a
$200{\times}$--$2000{\times}$ speed-up), exchanges $1.0$--$1.7$\,MB
per round versus $2.7$--$129$\,MB (a $3{\times}$--$40{\times}$
reduction), and incurs $1.9$--$5.8$\,$\mu$s per inference step, comparable to or below every neural baseline. Both savings
stem from \textsc{FedKAD}'s two architectural commitments:
closed-form local fitting, which avoids local-SGD drift under non-IID data,
and low-rank subspace consensus, where communication scales with the subspace
size $O(Lr)$ rather than the number of trainable model parameters
$O(|\theta|)$. In a federated deployment, TranAD's
hour-long per-client round and USAD's $60$--$130$\,MB per-round
upload are prohibitive on edge hardware, whereas \textsc{FedKAD}
fits in single-second rounds with sub-$2$\,MB uplinks. Combining the
accuracy verdict of \Cref{tab:main-stats} with the efficiency
profile of \Cref{fig:efficiency}, \textsc{FedKAD} is the only
method in our study that is simultaneously accurate (on $3/4$
datasets) and FL-deployable.

\subsubsection{On-device deployment (Raspberry Pi~4)}
To complement the data-centre measurements of \Cref{fig:efficiency},
we re-evaluate every method end-to-end on commodity edge hardware:
a Raspberry Pi~4 (Broadcom BCM2711, $4$\,$\times$ Cortex-A72 @
$1.5$\,GHz, ARMv8, $4$\,GB LPDDR4) running $64$-bit Raspberry Pi OS,
Python~3.13 and PyTorch~2.12 (CPU EP). The setting mirrors SMAP
($n_{\text{features}}{=}25$, $w{=}20$, FedKAD $r{=}24$,
$d_{\mathrm{lift}}{=}128$), with $100$ warm-started repeats per measurement and the
median reported. \Cref{fig:efficiency-pi} plots training time per global round
(s, $\log$), inference latency ($\mu$s/step) and per-round
communication: the only quantity uploaded each FedAvg round is the
Stiefel basis $P_i \in \mathbb{R}^{D\times r}$, so for FedKAD the
bubble area encodes $|P_i|{=}59$\,KB, an order of magnitude below
LSTM-AE ($227$\,KB) and USAD ($1.9$\,MB).
\textsc{FedKAD} reaches the lower-left efficient corner on the Pi:
$0.23$\,s/round training (vs.\ $0.32$\,s for the next-best
DeepSVDD, $0.87$\,s for LSTM-AE, and $5.66$\,s for TranAD) and
$0.79$\,$\mu$s/step inference (vs.\ $2.03$\,$\mu$s for DeepSVDD and
$24.5$\,$\mu$s for TranAD)\,---\,a $24{\times}$ training speed-up and
$31{\times}$ inference speed-up over the heaviest baseline, on a
\$$35$ device. Combined with the $59$\,KB per-round upload, this
establishes that \textsc{FedKAD} is the only method in our study
that fits the compute, memory and bandwidth budget of a real edge
deployment.

\begin{figure*}[!thp]
\centering
\includegraphics[width=0.95\linewidth,height=0.25\textheight,keepaspectratio]{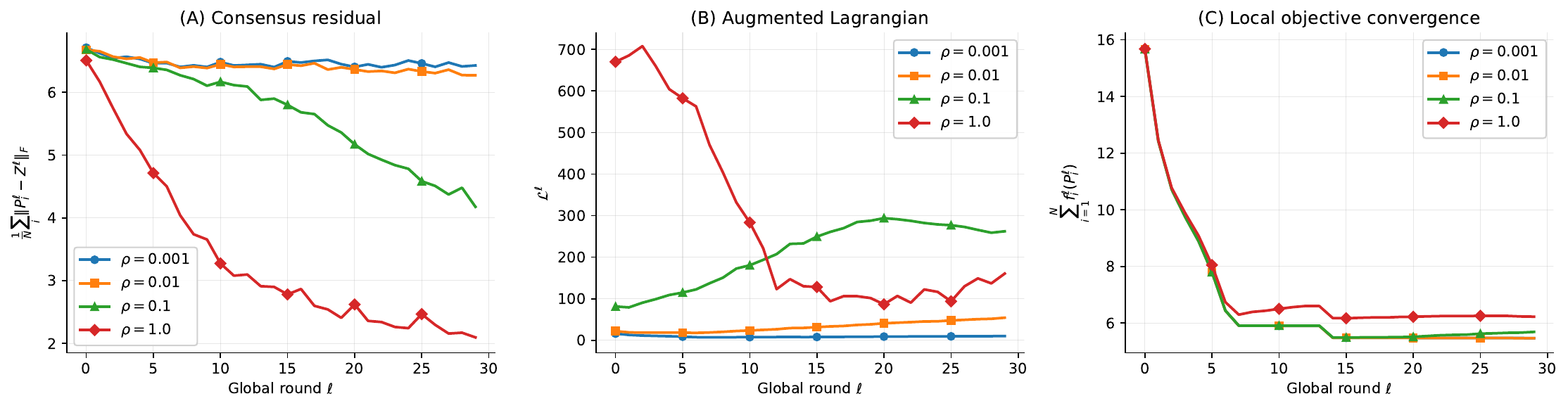}
\caption{Empirical convergence diagnostics on SMD (\(N{=}28\) clients,
\(r{=}24\), \(30\) rounds). \textbf{(A)} Mean consensus residual
\(\frac{1}{N}\sum_i\|P_i^\ell{-}Z^\ell\|_F\): decays at
\(\rho{\geq}1\) (\Cref{thm:fedkad_lagrangian_convergence}). \textbf{(B)} Augmented
Lagrangian \(\mathcal{L}^{\ell}\): descends toward a bounded
neighbourhood at \(\rho{=}1\); the late-round oscillation is consistent
with \Cref{lemma:fedkad_descent} under partial client participation and
the \(\Delta_Q^\ell\) perturbation from the \(Q_i\)-refit (A5).
\textbf{(C)} Local objective
\(\sum_i f_i^\ell(P_i^\ell)\): decreases over training.}
\label{fig:conv-theory}
\end{figure*}

\begin{figure}[!tbp]
\centering
\includegraphics[width=\columnwidth,keepaspectratio]{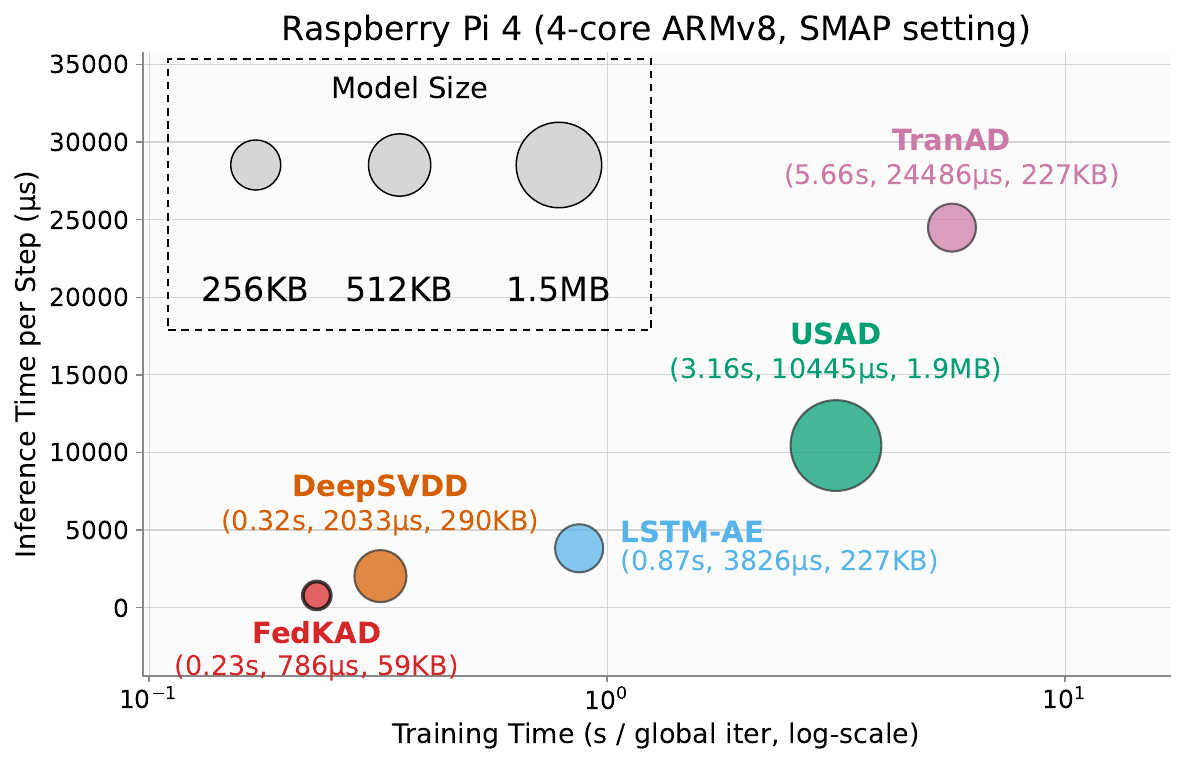}
\caption{On-device benchmark on a Raspberry Pi 4 under the SMAP setting. \textsc{FedKAD} achieves the best edge-deployment trade-off, requiring only 59 KB upload per round while reducing training and inference time by up to 25$\times$ and 31$\times$ over the slowest baseline.}
\label{fig:efficiency-pi}
\end{figure}

\subsubsection{Convergence guarantee (theoretical companion)}
\label{par:convergence-guarantee}
The federated training procedure of \textsc{FedKAD} is cast as a
non-convex consensus ADMM on the Stiefel manifold and admits the
convergence guarantee proved in \Cref{thm:fedkad_lagrangian_convergence} shows that for any
sufficiently large penalty \(\rho\) satisfying
\(\rho\mu_i(\rho)>2L_i^2\), the augmented Lagrangian
\(\{\mathcal{L}^{\ell}\}\) converges to a finite value, the successive
differences \(\|P_i^{\ell+1}-P_i^\ell\|_F\) and
\(\|Z^{\ell+1}-Z^\ell\|_F\) vanish, and the consensus residual
\(\|P_i^{\ell+1}-Z^{\ell+1}\|_F\to0\) for every client. The empirical convergence behaviour is further examined
below by tracking the consensus residual, augmented Lagrangian, and
local data-fit term under different choices of \(\rho\).

\Cref{fig:conv-theory} visualises the three convergence indicators on
SMD (\(N{=}28\) clients) for
\(\rho\in\{10^{-3},10^{-2},10^{-1},1\}\). At \(\rho{=}1\) (the default
operating point), the mean consensus residual
\(\frac{1}{N}\sum_i\|P_i^\ell-Z^\ell\|_F\) drops from \(6.5\) to \(2.1\)
over \(30\) rounds (panel~A), the augmented Lagrangian
\(\mathcal{L}^{\ell}\) descends from \(670\) to a neighbourhood of
\({\approx}90\)--\(160\) (panel~B), and the local data-fit term
\(\sum_i f_i^\ell(P_i^\ell)\) decreases from \(15.7\) to \(6.2\)
(panel~C). The Lagrangian oscillates mildly in later rounds rather than
descending strictly; this is consistent with
\Cref{lemma:fedkad_descent}, which permits per-round increases when the
finite-inner-loop error \(\varepsilon_\ell\), the Koopman refitting
perturbation \(\Delta_Q^\ell\), or partial-participation effects exceed
the descent terms, while still guaranteeing convergence to a finite
limit under A2 and A5. At smaller \(\rho\), the penalty is insufficient:
the consensus residual stagnates and \(\mathcal{L}^{\ell}\) grows
because the dual accumulation exceeds the descent. The transition occurs
between \(\rho{=}0.1\) and \(\rho{=}1\), confirming the
``sufficiently large \(\rho\)'' condition of the theorem and validating
the default choice.

%% file: sections/conclusion/conclusion.tex
\section{Conclusion}
This paper presented \textsc{FedKAD}, a federated low-rank Koopman framework for anomaly detection in resource-constrained IoT multivariate time series. By learning compact Koopman dynamics and exchanging only low-rank subspace variables, \textsc{FedKAD} avoids the heavy training and communication cost of federated neural baselines while keeping raw data and local dynamics on device. Experiments on four benchmarks show that \textsc{FedKAD} achieves the best F$_1$ score on three datasets under the primary PA\%K protocol and provides substantial reductions in training time, communication, and inference latency. The on-device Raspberry Pi 4 benchmark further demonstrates its practicality for commodity edge hardware with limited compute, memory, and bandwidth. Future work will study adaptive rank selection, personalized thresholding, and stronger privacy mechanisms.